\documentclass[10pt]{article} 
\usepackage[accepted]{tmlr}

\usepackage{amsmath,amsfonts,bm}

\def\eqref#1{equation~\ref{#1}}

\def\1{\bm{1}}

\DeclareMathAlphabet{\mathsfit}{\encodingdefault}{\sfdefault}{m}{sl}
\SetMathAlphabet{\mathsfit}{bold}{\encodingdefault}{\sfdefault}{bx}{n}

\usepackage{hyperref}
\usepackage{url}

\usepackage[utf8]{inputenc} 
\usepackage[T1]{fontenc}    
\usepackage{hyperref}       
\usepackage{url}            
\usepackage{booktabs}       
\usepackage{amsfonts}       
\usepackage{nicefrac}       
\usepackage{microtype}      
\usepackage{xcolor}         
\usepackage{amsthm}
\usepackage{amsmath}
\usepackage{enumitem}                 
\usepackage[most]{tcolorbox}          

\AtBeginDocument{%
  \setlength{\abovedisplayskip}{5pt plus 2pt minus 2pt}%
  \setlength{\belowdisplayskip}{5pt plus 2pt minus 2pt}%
  \setlength{\abovedisplayshortskip}{2pt plus 2pt}%
  \setlength{\belowdisplayshortskip}{3pt plus 2pt minus 2pt}%
}

\usepackage{tikz}
\usetikzlibrary{arrows.meta, decorations.pathmorphing, backgrounds, fit, calc, shapes.misc, positioning}
\usepackage{fontawesome5}

\definecolor{NodeBlue}{RGB}{60, 110, 200}
\definecolor{BathGray}{RGB}{160, 165, 175}
\definecolor{EdgeOrange}{RGB}{220, 130, 50}
\definecolor{InterventionGreen}{RGB}{75, 165, 100}
\definecolor{HamFrame}{RGB}{90, 90, 110}
\definecolor{HamBg}{RGB}{248, 248, 252}
\definecolor{LabelInk}{RGB}{60, 60, 75}

\definecolor{DecompBoxBg}{RGB}{235, 240, 250}
\definecolor{DecompBoxFrame}{RGB}{90, 110, 160}

\definecolor{IntuitionBg}{RGB}{255, 248, 220}
\definecolor{IntuitionFrame}{RGB}{180, 130, 50}
\definecolor{mygreen}{rgb}{0.8627450980392157, 0.9294117647058824, 0.7843137254901961}

\theoremstyle{plain}
\newtheorem{theorem}{Theorem}[section]
\newtheorem{proposition}[theorem]{Proposition}
\newtheorem{lemma}[theorem]{Lemma}

\theoremstyle{definition}
\newtheorem{definition}[theorem]{Definition}
\newtheorem{assumption}[theorem]{Assumption}

\theoremstyle{remark}
\newtheorem{remark}[theorem]{Remark}

\title{Reconciling Causality and Non-Equilibrium Thermodynamics with Hamiltonian Causal Models}

\author{\name Dario Rancati\email dario.rancati@ista.ac.at \\
      \addr Institute of Science and Technology Austria
      \AND
      \name Max Welling \\
      \addr CuspAI \\ University of Amsterdam
      \AND
      \name Francesco Locatello \\
      \addr Institute of Science and Technology Austria}

\begin{document}

\maketitle

\begin{abstract}
Causal modeling of physical temporal phenomena must handle interventions that act along trajectories, nonstationary induced laws, path-dependent effects, and feedback mediated by dynamics, all challenging in standard causal models. We introduce \emph{Hamiltonian Causal Models} (HCMs), a trajectory-level framework in which observed variables interact with local environments and interventions act as controls of Hamiltonian mechanisms. HCMs separate immutable equations of motion from intervenable mechanisms and define causal effects as discrepancies between interventional path laws. A key motivation for HCMs is their natural interface with non-equilibrium thermodynamics. Entropy production quantifies the irreversibility of a process and is a central causal observable: it is estimable from data and witnesses causal effects along the system's evolution that are invisible to endpoint and cumulative versions of the standard average treatment effect. As in physics, cause and effect are not primitives of the relation between two random variables but arise from the non-invertibility of the thermodynamic arrow. With this, our paper reconciles the language of statistical causal models and non-stationary thermodynamics, offering new tools to describe causality in a wide range of physical systems. 

\end{abstract}

\section{Introduction}

Statistical approaches to causality aim to quantify how systems respond to
interventions rather than merely describing observed associations. While
correlations between variables are ubiquitous, causal relationships encode
how a joint distribution would change under external manipulations~\citep{spirtes2000causation,pearl2009causality,imbens2015causal}. This
interventional perspective has driven substantial developments across fields
including econometrics~\citep{imbens2015causal}, epidemiology~\citep{miguel2023causal}, machine
learning~\citep{scholkopf2021toward,peters2017elements}, and, most recently, AI for science~\citep{scholkopf2016modeling,feuerriegel2024causal,cadeiprediction,mencattini2026exploratory}. To accommodate the diverse needs of these
domains, multiple complementary frameworks have emerged, including potential
outcomes~\citep{imbens2015causal}, causal graphical models~\citep{pearl2009causality}, and structural causal models~\citep{spirtes2000causation}, but
also temporal approaches such as Granger causality~\citep{granger1969investigating} and differential
equation--based formulations~\citep{peters2022causal,lorch2024causal, BoekenMooij2024}. Despite this progress, no single existing framework natively captures the
combination of properties that arise naturally in physical dynamical
systems without ad-hoc constructions: interventions may be time-dependent controls acting along
trajectories, induced laws may be nonstationary, causal effects may be
complex and path-dependent, and feedback or cyclic dependence may unfold
through time.

\looseness=-1 These modeling challenges point naturally to the gap between statistical causal models and how causation emerges in physical systems.
As emphasized by
\citet{rovelli2023oriented}, in physical systems, causal direction
and temporal evolution are both tied to the notion of irreversibility. Indeed, many laws of physics are time-reversal symmetric: a solution to 
Newton's equations, for instance, remains a valid solution if all the velocities are
reversed. Causal experience is not symmetric in this way: if a stone is
thrown into a pond, we observe outgoing ripples; the time-reversed movie,
with ripples converging inward to the stone's entry point, is compatible
with the microscopic equations of motion but overwhelmingly unlikely at the
macroscopic level. This suggests that the causal influence of the rock on the pond is not simply encoded in the equations of motion but is tied to
the thermodynamic arrow of time. 
Non-equilibrium
thermodynamics provides quantitative tools for measuring these notions, most
notably entropy production, which measures the irreversibility of a
trajectory by comparing it with its time reversal.

This connection becomes even sharper if we want to accommodate feedback and time-dependent interventions. In standard causal models, an intervention is idealized as
an arbitrary external operation that fixes, replaces, or perturbs a mechanism. At the same time, a causal model is only considered true when it aligns with randomized studies~\citep{peters2014causal}. In a
physical system, such an operation must be implemented by a controller or
protocol acting along the trajectory, and may exchange
information with the system doing so. Maxwell's demon \citep{sagawa2012informationthermodynamicsmaxwellsdemon} is the canonical example of a feedback intervention in
thermodynamics. The demon observes microscopic fluctuations and acts
conditionally on this information, apparently reducing the entropy of the
controlled system and thus contradicting the Second Law of Thermodynamics. The resolution is not that the intervention is
impossible, but that its thermodynamic analysis is incomplete unless the
controller's information (and hence entropic) cost is
included. The lesson for causal modeling is that time-dependent or adaptive
interventions are not mere assignments: they process information
from the trajectory and feed it back into the dynamics, thereby carrying
both causal and thermodynamic content. 

In this paper, we introduce
\emph{Hamiltonian Causal Models} (HCMs), a framework for causal modeling of
controlled Hamiltonian and nonequilibrium processes. In HCMs, trajectories
are the primitive causal objects, and interventions are represented as
time-dependent control variables acting through the dynamics whose thermodynamic impact can be measured with non-equilibrium thermodynamic techniques. The model class we can describe is broad but structured. It includes familiar stochastic
dynamics such as controlled Langevin diffusions $
    dX_t = -\nabla U(X_t, \lambda_t)\mathrm{d}t + \sigma(\lambda_t)\mathrm{d}W_t $ while also allowing more general
controlled Hamiltonian and nonequilibrium evolutions. Overall, HCMs follow familiar causal assumptions such as independent mechanisms and markovianity. At the same time, they make it easy to model physical systems, without an ad-hoc characterization of time and dynamics and give us a language that naturally supports enticing properties: 

\begin{tcolorbox}[
  colback=IntuitionBg!12, colframe=IntuitionFrame,
  title=\textbf{Key Differentiating Properties of HCMs},
  fonttitle=\bfseries\small,
  sharp corners, boxrule=0.6pt,
  left=8pt, right=8pt, top=4pt, bottom=4pt,
]
\looseness=-1 HCMs provide a language in which one can natively study causal effects over {finite time
horizons, for stationary or nonstationary trajectories, in discrete or
continuous time}. Within this language, {interventions may be time-dependent,
ordered, adaptive, and mutually interacting; feedback loops and cyclic
dependence enter through temporal evolution;  assumptions such as
equilibrium initial conditions or a fully specified static graphical
structure are not imposed at the outset; and the distinction between causal
interventions and physical laws can be made explicit.}
\end{tcolorbox}

We summarize our contributions as follows:
\begin{itemize}[leftmargin=1.4em,itemsep=0.25em]
    \item \textbf{Hamiltonian Causal Models.}
    We introduce Hamiltonian Causal Models (HCMs), a trajectory-level
    causal framework for controlled Hamiltonian systems coupled to local
    environments. In HCMs, interventions are time-dependent policies acting
    on local Hamiltonian mechanisms, while causal effects are defined as
    discrepancies between interventional path laws.
    
    \item \textbf{Entropy production as a causal observable.}
    We develop the thermodynamic theory of HCMs. We show that work rates
    identify local causal interactions when the Hamiltonian is known, and
    that entropy production provides a path-space, data-estimable witness of
    causal effect. We further show how local 
    entropy production rates can be used to obtain a local causal influence criterion for
    nonstationary temporal dynamics.
    
    \item \textbf{Experimental validation.}
    We validate the theory on controlled dynamical systems. Our experiments
    show that entropy production detects path-level causal effects that can
    be invisible to endpoint or cumulative treatment-effect summaries, and
    that nodewise entropy-production rates can reveal the local parent
    structure induced by HCM interventions in cyclic dynamical systems.
\end{itemize}

\section{Hamiltonian Causal Models}
\label{sec:hcm}

This section introduces Hamiltonian Causal Models (HCMs) and their
notion of causal effect. We give the formal definitions and show that
HCMs describe a wide range of temporal phenomena, recovering several
existing trajectory-level causal models as special cases. Detailed
conditions and regularity hypotheses are deferred to
Appendix~\ref{app:hcm}.

\begin{figure}[t]
\centering
\begin{tikzpicture}[
  >={Stealth[length=2.6mm, width=2.2mm]},
  every node/.style={font=\large},
  node/.style={
    circle, draw=NodeBlue, fill=NodeBlue!15, line width=1pt,
    minimum size=11mm, inner sep=0pt, font=\large\bfseries,
  },
  bath/.style={
    draw=BathGray, fill=BathGray!18, line width=0.7pt,
    decorate, decoration={random steps, segment length=2mm, amplitude=0.5mm},
    rounded corners=2.5mm,
    minimum width=14mm, minimum height=9mm, inner sep=1pt,
    font=\large\itshape, text=BathGray!85!black,
  },
  knob/.style={
    draw=InterventionGreen, fill=InterventionGreen!22, line width=0.8pt,
    rounded corners=1.2pt, minimum width=9mm, minimum height=5mm,
    font=\large\bfseries, text=InterventionGreen!50!black,
    inner sep=2pt,
  },
  knobline/.style={
    InterventionGreen!75!black,
    line width=0.7pt,
    dashed,
    dash pattern=on 1.4pt off 1.2pt,
    shorten >=7pt,
    shorten <=4pt,
  },
  bathchannel/.style={
    BathGray!75!black, line width=1pt,
    decorate, decoration={snake, amplitude=0.4mm, segment length=1.8mm,
    pre length=1.5mm, post length=1.5mm}
  },
  causaledge/.style={
    EdgeOrange, line width=1.3pt, ->,
    shorten >=2pt, shorten <=2pt
  },
  selflbl/.style={
    font=\large,
    text=NodeBlue!70!black,
    inner sep=1pt,
  },
  bathintlbl/.style={
    font=\large,
    text=BathGray!50!black,
    inner sep=1pt,
  },
  bathfreelbl/.style={
    font=\large\itshape,
    text=BathGray!60!black
  },
  mechlbl/.style={
    font=\large,
    text=EdgeOrange!75!black,
    inner sep=1pt,
  },
]

\node[node] (X1) at (-3.5, 0.6) {$X_1$};
\node[node] (X2) at ( 3.5, 0.6) {$X_2$};
\node[node] (X3) at ( 0.0, 3.6) {$X_3$};

\node[bath] (B1) at ($(X1)+(-2.2, 0)$) {$E_1$};
\node[bath] (B2) at ($(X2)+( 2.2, 0)$) {$E_2$};
\node[bath] (B3) at ($(X3)+( 0.0, 1.6)$) {$E_3$};

\node[bathfreelbl] at ($(B1)+(0, -0.85)$) {$H_{E_1}$};
\node[bathfreelbl] at ($(B2)+(0, -0.85)$) {$H_{E_2}$};
\node[bathfreelbl] at ($(B3)+(1.2, 0)$) {$H_{E_3}$};

\draw[bathchannel] (X1) -- (B1);
\draw[bathchannel] (X2) -- (B2);
\draw[bathchannel] (X3) -- (B3);

\node[bathintlbl] (BI1) at ($(X1)!0.5!(B1)+(0.21, 0.40)$) {$H_1^{\mathrm{bath}}$};
\node[bathintlbl] (BI2) at ($(X2)!0.5!(B2)+(-0.1, 0.40)$) {$H_2^{\mathrm{bath}}$};
\node[bathintlbl] (BI3) at ($(X3)!0.5!(B3)+(0.65, 0)$) {$H_3^{\mathrm{bath}}$};

\node[selflbl] (S1) at ($(X1)+(0, 0.95)$) {$H_1^{\mathrm{self}}$};
\node[selflbl] (S2) at ($(X2)+(0, -0.95)$) {$H_2^{\mathrm{self}}$};
\node[selflbl] (S3) at ($(X3)+(1.15, 0.1)$) {$H_3^{\mathrm{self}}$};

\draw[causaledge] (X1) to[bend right=10] (X2);
\node[mechlbl] (M2) at ($(X1)!0.5!(X2)+(0,-0.85)$) {$H_2^{\mathrm{mech}}$};

\draw[causaledge] (X2) to[bend right=20] (X3);
\node[mechlbl] (M3) at ($(X2)!0.5!(X3)+(0.85,0.75)$) {$H_3^{\mathrm{mech}}$};

\draw[causaledge] (X3) to[bend right=20] (X1);
\node[mechlbl] (M1) at ($(X3)!0.5!(X1)+(-1.15,0.25)$) {$H_1^{\mathrm{mech}}$};

\node[knob] (L1) at ($(X1)+(-1.6, 1.8)$) {\faGavel\, $\lambda_1$};
\node[knob] (L2) at ($(X2)+(1.3, -1.6)$) {\faGavel\, $\lambda_2$};
\node[knob] (L3) at ($(X3)+(2.8, 0.6)$) {\faGavel\, $\lambda_3$};

\draw[knobline] (L1) to[bend left=8] (S1);
\draw[knobline] (L1) to[bend right=15] (BI1);
\draw[knobline] (L1) to[bend left=20] (M1);

\draw[knobline] (L2) to[bend right=25] (S2);
\draw[knobline] (L2) to[bend right=15] (BI2);
\draw[knobline] (L2) to[bend left=20] (M2);

\draw[knobline] (L3) to[bend left=8] (S3);
\draw[knobline] (L3) to[bend right=15] (BI3);
\draw[knobline] (L3) to[bend left=20] (M3);

\begin{pgfonlayer}{background}
  \node[
    draw=HamFrame, line width=0.8pt, dashed,
    fill=HamBg, rounded corners=4mm,
    fit={(B1) (B2) (B3) (L1) (L2) (L3)},
    inner sep=7mm,
  ] (Hframe) {};
\end{pgfonlayer}

\node[
  anchor=north west,
  font=\large\itshape,
  text=HamFrame,
  fill=HamBg,
  inner sep=3pt,
  rounded corners=1pt,
]
  at ($(Hframe.north west)+(0.25,-0.2)$)
  {Total Hamiltonian $H(z,\lambda)$};

\node[
  anchor=north east,
  font=\large\itshape,
  text=HamFrame!60,
  align=right,
  fill=HamBg,
  inner sep=3pt,
  rounded corners=1pt,
]
  at ($(Hframe.north east)+(-0.25,-0.2)$)
  {$\dot q_i = \partial_{p_i}\!H$\\[1pt]
   $\dot p_i = -\partial_{q_i}\!H$};

\end{tikzpicture}
\caption{Schematic Representation of an HCM: appropriate Hamiltonians are associated with every component and interaction among them, representing the energy stored in that block. Interventions are modelled to modify the Hamiltonian of each observable $X_i$ and of the incoming interactions.}
\end{figure}

\begin{tcolorbox}[
  colback=IntuitionBg!12, colframe=IntuitionFrame,
  title=\textbf{Intuition},
  fonttitle=\bfseries\small,
  sharp corners, boxrule=0.6pt,
  left=8pt, right=8pt, top=4pt, bottom=4pt,
]
A Hamiltonian Causal Model describes an \textbf{observed system} of $n$
variables, each paired with an \textbf{unobserved thermal bath} with which
it exchanges energy and information. A deterministic scalar function $H$,
the \emph{Hamiltonian}, drives the joint evolution of system and
environments through a prescribed set of microscopic equations of motion.
Stochasticity in the observed trajectories arises from two sources, both
extrinsic to the dynamics: randomness in the initial conditions, and the
fact that the environments are unobserved. The environments thus play the
role of \textbf{exogenous noise} variables of classical Pearl causality. The Hamiltonian decomposes into terms associated with each component of the system: an intrinsic potential per variable, couplings to causal parents, couplings to the baths, and the baths' own internal dynamics. In this sense, the Hamiltonian's decomposition plays the role of the \textbf{causal Markov factorization}
from classical structural causal models. \textbf{Interventions} modify
$H$, and through it the mechanisms encoded in the decomposition, without
altering the equations of motion themselves.
\end{tcolorbox}

\vspace{5pt}

\begin{definition}[Hamiltonian Causal Model]
\label{def:hcm}
Fix a horizon $T\in\mathbb{R}_{+}\cup\{+\infty\}$. An
\emph{Hamiltonian Causal Model} is a tuple
$\mathcal{M}=(\mathcal{P},z_0,\Gamma,\omega,\mathcal{L},\mathcal{A},H)$
consisting of:
\begin{enumerate}[label=\textbf{(\arabic*)},leftmargin=1.6em,itemsep=0.35em]

\item A probability space $\mathcal{P}=(\Omega,\mathcal{F},\mathbb{P})$
and a random initial state $z_0:\Omega\to\Gamma$.

\item A symplectic manifold $(\Gamma,\omega)$ with a node-wise
factorization
$\Gamma=\Gamma_X\times\Gamma_E,\;
 \Gamma_X=\textstyle\prod_{i=1}^n\Gamma_{X_i},\;
 \Gamma_E=\textstyle\prod_{i=1}^n\Gamma_{E_i}$,
where $\Gamma_{X_i}$ is the state space of the observed variable $X_i$
and $\Gamma_{E_i}$ the state space of its bath.

\item An intervention manifold $\mathcal{L}=\prod_{i=1}^n\mathcal{L}_i$
and a class $\mathcal{A}$ of admissible policies
$\lambda:[0,T]\times\Omega\to\mathcal{L}$.
\item A smooth Hamiltonian $H:\Gamma\times\mathcal{L}\to\mathbb{R}$
admitting the node-wise decomposition:
\end{enumerate}
\vspace{-0.2em}
\begin{tcolorbox}[colback=DecompBoxBg!18,colframe=DecompBoxFrame,
sharp corners,boxrule=0.5pt,left=8pt,right=8pt,top=4pt,bottom=4pt]
\vspace{-1em}
\begin{equation}
\label{eq:hcm-decomposition}
  H(z,\lambda)
  =
  \underbrace{H_E(e)}_{\text{free environments}}
  +
  \sum_{i=1}^{n}
  \Bigl(
    \underbrace{H_i^{\mathrm{self}}(x_i,\lambda_i)}_{\text{intrinsic}}
    +
    \underbrace{H_i^{\mathrm{mech}}(x_i,x_{-i},\lambda_i)}_{\text{coupling to parents}}
    +
    \underbrace{H_i^{\mathrm{bath}}(x_i,e_i,\lambda_i)}_{\text{coupling to environment}}
  \Bigr)
\end{equation}
\vspace{-0.9em}
\end{tcolorbox}
\vspace{-0.2em}
\begin{enumerate}[label=\textbf{(\arabic*)},leftmargin=1.6em,itemsep=0.35em]
\setcounter{enumi}{4} 
\item Dynamics generated by $H$: setting $H_\lambda(t,z):=H(z,\lambda_t)$,
the system evolves under the controlled flow
$\Phi_t^\lambda:\Gamma\to\Gamma$ defined by the unique vector field
$Y_{H_\lambda}$ satisfying
$\omega(Y_{H_\lambda}(t,\cdot),\cdot)=\mathrm{d}H_\lambda(t,\cdot)$ (in
canonical coordinates this is the familiar pair of Hamilton equations; see
Remark~\ref{rem:darboux}). The
\emph{forward observed path law} is
$\mathbb{P}_F^\lambda:=(\Pi_X\circ\Psi^\lambda)^{\#}\mathbb{P}$, where
$\Psi^\lambda(\omega)(t):=\Phi_t^\lambda(z_0(\omega))$ and $\Pi_X$
projects onto observed coordinates.
\end{enumerate}

We assume regularity sufficient for the controlled flow to be globally
defined on $[0,T]$, factorization of the initial environment law across
nodes, and existence of a time-reversal compatible with $H$; precise
hypotheses are stated in Appendix~\ref{app:HCM-hypotheses}.
\end{definition}

\looseness=-1\textbf{Causal semantics. \ }
As anticipated, the baths $E_1,\dots,E_n$ play the role of exogenous
noise. On interventions, we adopt the point of view
of Phenomenological Causality \citep{janzing2022phenomenologicalcausality}, consistent with other works in causality indexing interventions as part of the causal model~\citep{mooij2020joint}: causal quantities
are defined relative to a set of actions available to an operator,
formalized here as the admissible class $\mathcal{A}$ of adapted
intervention policies. Through $\lambda_i$, the operator can modify any
of the three node-$i$ contributions in \eqref{eq:hcm-decomposition}.
For example, by reshaping the strength of $X_i$'s couplings to its
causal parents through $H_i^{\mathrm{mech}}$, or by decoupling $X_i$
from its bath through $H_i^{\mathrm{bath}}$, effectively detaching the
subsystem from its exogenous noise. Crucially, interventions modify the
Hamiltonian function but never the equations of motion, separating
intervenable mechanisms from immutable dynamical law. In this setting, the role of the Causal Markov factorization is played by the Hamiltonian decomposition itself, which induces a directed graph on the observed nodes. The Hamiltonian decomposition therefore plays a role analogous to the
independent-causal-mechanisms assumption in structural causal modeling.

\begin{definition}[Parents]
\label{def:hcm-parents}
For $i\neq j$, $X_j$ is a \emph{parent} of $X_i$, written $j\to i$, if
$\partial_{x_j}H_i^{\mathrm{mech}}\not\equiv 0$.
\end{definition}

The following result, proven in Appendix~\ref{app:proofs}, shows
that HCMs describe a wide class of stochastic dynamics and
recover classical structural causal models as a special case.

\begin{theorem}[Realization, informal]
\label{thm:realization-informal}
Let $U:\mathbb{R}^{n}\times\mathcal{L}\to\mathbb{R}$ be a smooth
potential and $\sigma:\mathcal{L}\to\mathbb{R}^{n\times n}$ a smooth,
diagonal, positive diffusion coefficient. Then there exists a sequence
of HCMs whose observed coordinates have, in the limit of large bath
size and small inertial parameter, path law on
$C([0,T];\mathbb{R}^{n})$ equal to that of the It\^o diffusion
\begin{equation}
\label{eq:vector-sde}
  \mathrm{d}X_t \;=\; -\nabla_x U(X_t,\lambda_t)\,\mathrm{d}t
  \;+\; \sigma(\lambda_t)\,\mathrm{d}W_t,
\end{equation}
where $W$ is a standard $n$-dimensional Brownian motion independent of
the initial state $X_0$.
\end{theorem}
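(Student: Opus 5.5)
The construction I would use is the Caldeira--Leggett / Ford--Kac--Mazur model, one independent harmonic heat bath per node, followed by a Smoluchowski--Kramers (overdamped) limit. For each node $i$, let $\Gamma_{X_i}=T^*\mathbb{R}$ with coordinates $(q_i,p_i)$. Let $E_i$ consist of $N$ oscillators $(Q_{i,k},P_{i,k})_{k\le N}$ with frequencies $\omega_{k}$ and couplings $c_{k}$. I would set
\[
H_E(e)=\sum_{i,k}\Bigl(\frac{P_{i,k}^2}{2}+\frac{\omega_k^2 Q_{i,k}^2}{2}\Bigr),\qquad
H_i^{\mathrm{bath}}(q_i,e_i,\lambda_i)=-\,g_i(\lambda_i)\,q_i\sum_k c_k Q_{i,k}+\frac{g_i(\lambda_i)^2 q_i^2}{2}\sum_k\frac{c_k^2}{\omega_k^2}.
\]
The second term is the usual counterterm, so the bath does not renormalize the potential. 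For the system part I would take $H_i^{\mathrm{self}}=p_i^2/(2m)$ and split the smooth potential as $U=\sum_i U_i$. Here $U_i$ collects the terms of $U$ depending on $x_i$ (for instance $U_i=U/n$, or any splitting consistent with the desired parent structure). I would put the $x_i$-only part into $H_i^{\mathrm{self}}$ and the remainder into $H_i^{\mathrm{mech}}$. Since only the sum $U$ enters the equations of motion, this decomposition satisfies \eqref{eq:hcm-decomposition}. The coupling scale $g_i(\lambda_i)$ is chosen so that the induced friction and noise reproduce $\sigma_{ii}(\lambda_t)$.

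The key steps are as follows.

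\textbf{Step 1.} Solve the linear bath equations explicitly by variation of constants. Substituting into Hamilton's equations for $(q_i,p_i)$ gives the exact generalized Langevin equation
\[
\dot p_i=-\partial_{q_i}U(q,\lambda_t)-\int_0^t K_i(t,s)\,\dot q_i(s)\,\mathrm{d}s+\xi_i^N(t).
\]
Here the memory kernel is $K_i$, built from $g_i(\lambda_t)g_i(\lambda_s)\sum_k (c_k^2/\omega_k^2)\cos\omega_k(t-s)$. The force $\xi^N_i$ is linear in the initial bath data. If those data are Gibbs-distributed at temperature $\beta^{-1}$ conditionally on $q_i(0)$, which is allowed by the factorized initial environment law, then $\xi^N_i$ is Gaussian with covariance $\beta^{-1}K_i$. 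This is the fluctuation--dissipation relation, and independence across nodes comes from the factorization.

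\textbf{Step 2.} Let $N\to\infty$ with spectral density $J(\omega)=\frac{\pi}{2}\sum_k \frac{c_k^2}{\omega_k}\delta(\omega-\omega_k)$ converging to the Ohmic law $\gamma\omega$ with cutoff removed. Then $K_i\to 2\gamma_i(\lambda_t)\delta(t-s)$, and $\xi^N_i$ converges in law to $\sqrt{2\gamma_i\beta^{-1}}\,\dot W_i$. I would prove this via the Kupferman--Stuart--Terry--Tupper results on fractional/Ohmic heat baths. This yields the underdamped Langevin system
\[
\mathrm{d}q=\frac{p}{m}\mathrm{d}t,\qquad \mathrm{d}p=-\nabla U\,\mathrm{d}t-\Gamma(\lambda_t)\frac{p}{m}\mathrm{d}t+\sqrt{2\beta^{-1}\Gamma(\lambda_t)}\,\mathrm{d}W.
\]

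\textbf{Step 3.} Rescale time and friction and take $m\to0$ (the Smoluchowski--Kramers limit). For state-independent, diagonal, time-dependent $\Gamma(\lambda_t)$, the results of Hottovy--Volpe--Wehr, or Freidlin's, give convergence of $q$ in $C([0,T])$ to
\[
\mathrm{d}X=-\Gamma^{-1}\nabla U\,\mathrm{d}t+\sqrt{2\beta^{-1}\Gamma^{-1}}\,\mathrm{d}W.
\]
To match \eqref{eq:vector-sde} exactly, I would absorb the mobility $\Gamma^{-1}$ into the potential coupling. This is done by rescaling time per node, or more simply by choosing node-$i$ masses and couplings so that the effective drift is $-\nabla U$ and the noise is $\sigma_{ii}(\lambda_t)^2=2\beta^{-1}\gamma_i(\lambda_t)^{-1}$. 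With a $\lambda$-dependent potential prefactor placed in $H^{\mathrm{self}},H^{\mathrm{mech}}$, both conditions can be met, since $\sigma$ is positive and diagonal.

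\textbf{Main obstacle.} The hardest part is making the double limit rigorous when $\lambda_t$ is time-dependent and possibly adapted. The kernel $g(\lambda_t)g(\lambda_s)$ is then non-stationary, and the $\lambda$-dependent counterterm contributes an extra force $\propto \dot g\, q$. My plan is to restrict $\mathcal{A}$ to policies that are piecewise constant, or $C^1$ and adapted to the observed filtration, and to prove the result interval by interval using the Markov property, then pass to the limit by density. An alternative is to keep $g$ fixed and put all $\lambda$-dependence into a time change. Finally, I would take the diagonal limit $N\to\infty$ first and then $m\to0$, and justify tightness in $C([0,T];\mathbb{R}^n)$ by uniform moment bounds under a growth or coercivity condition on $U$, stated as a regularity hypothesis.
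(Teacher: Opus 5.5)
Your overall route is the paper's: one Caldeira--Leggett oscillator bath per node, the exact generalized Langevin equation, the Ohmic Markovian limit to underdamped Langevin dynamics, then the Smoluchowski--Kramers limit. You give considerably more detail than the paper's sketch.

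\textbf{The gap: the parameter matching in Step 3 fails.} You keep a single bath temperature $\beta^{-1}$ and put the node dependence of the noise into the friction $\gamma_i(\lambda_i)$. The overdamped limit, however, has drift $-\Gamma^{-1}\nabla\tilde U$, where $\tilde U$ is the potential actually placed in the Hamiltonian. The masses drop out of this limit, and time cannot be rescaled node by node in a coupled system. So getting drift exactly $-\nabla U$ requires $\partial_{x_i}\tilde U=\gamma_i\,\partial_{x_i}U$ for every $i$. Because the $\gamma_i$ do not depend on $x$, equality of the mixed partials of $\tilde U$ forces $(\gamma_i-\gamma_j)\,\partial_{x_i}\partial_{x_j}U\equiv0$. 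That is, $\gamma_i=\gamma_j$ whenever $X_i$ and $X_j$ interact. No $\lambda$-dependent prefactors in $H_i^{\mathrm{self}},H_i^{\mathrm{mech}}$ and no splitting of $U$ avoid this, since the constraint is on the total potential; already $U=c\,x_1x_2$ with $\gamma_1\neq\gamma_2$ is a counterexample. Your relation $\sigma_{ii}^2=2\beta^{-1}\gamma_i^{-1}$ then forces $\sigma_{ii}=\sigma_{jj}$ for interacting nodes, so a general diagonal $\sigma$ is out of reach. This is the Einstein relation at work: once the drift is the gradient of a single potential, the mobility is common to coupled nodes, and heterogeneity of the noise can only come from the temperatures.

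\textbf{The fix is the paper's choice.} Take unit mobility, so the drift is $-\nabla U$ exactly, and give each node's bath its own temperature $\beta_i^{-1}=\sigma_i^2/2$, which (H2) explicitly allows. The resulting multi-temperature system is genuinely out of equilibrium, as intended.

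\textbf{Two further remarks.} First, your splitting $U_i=U/n$ depends on all of $\lambda$, whereas the decomposition requires $H_i^{\mathrm{self}}$ and $H_i^{\mathrm{mech}}$ to depend on $\lambda_i$ only. You therefore need $U(x,\lambda)=\sum_i U_i(x,\lambda_i)$ as a hypothesis; the paper's sketch assumes this tacitly as well. Second, for genuinely time-varying $\sigma(\lambda_t)$, a bath temperature fixed by the Gibbs initialization at $t=0$ does not change with $\lambda_i(t)$, so the paper's sketch is itself loose there. Your idea of modulating the coupling $g_i$ and compensating with a prefactor on the potential is the natural tool for time variation. By the argument above, though, it can only supply a time-dependent factor shared by interacting nodes, and the node-to-node ratios must still come from the temperatures.
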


\paragraph{Static causal models as a stationary special case.}
Choosing $\sigma(\lambda_t) = sI$, $U=-\tfrac{1}{2}s^{2}\log p$ for a smooth strictly
positive density $p$ and initializing
$X_0\sim p$ and $\lambda \equiv 0$, the HCM realizes a stationary
diffusion with observational
distribution $p$. We thus recover classical structural causal models,
together with their cyclic generalizations as defined in \citet{lorch2024causal},
as the stationary regime of an HCM. Of note, by tuning the set of admissible policies $\mathcal{A}$ one can also define a notion of interventions that takes the dynamic response to $\lambda$ into account.

With HCMs in place, we turn to the notion of causal effect they
support. Let
$\mathbb{P}_i^\lambda := (\Pi_i)_\#\mathbb{P}_F^\lambda$ denote the
\emph{node-wise marginal path law} of $X_i$ under policy $\lambda$,
obtained by projecting $\mathbb{P}_F^\lambda$ onto the $i$-th
coordinate.

\begin{tcolorbox}[
  colback=IntuitionBg!12,
  colframe=NodeBlue,
  fonttitle=\bfseries\small,
  sharp corners,
  boxrule=0.7pt,
  left=8pt,
  right=8pt,
  top=5pt,
  bottom=5pt,
]
\begin{definition}[Causal effect]
\label{def:hcm-causal-effect}
Fix $i\neq j$ and a baseline policy $\lambda_{-j}$ for all intervention
coordinates except $\lambda_j$. We say that $X_j$ has a
\emph{causal effect} on $X_i$ over $[0,T]$ if there exist two
admissible policies $\lambda_j^{(1)},\lambda_j^{(2)}$ such that $
\mathbb{P}_i^{(\lambda_j^{(1)};\lambda_{-j})}
\;\neq\;
\mathbb{P}_i^{(\lambda_j^{(2)};\lambda_{-j})}.$
\end{definition}
\end{tcolorbox}

Definition~\ref{def:hcm-causal-effect} is intrinsically
trajectory-level: causal effect is a change in the entire path law of
$X_i$, not in a single-time observable. It is natural to quantify it
through discrepancies between the induced path laws.

\begin{definition}[Generalized trajectory ATE]
\label{def:hcm-ate}
Let
$\mathfrak{D}:\mathcal{P}\!\left(C([0,T],\Gamma_{X_i})\right)
\times \mathcal{P}\!\left(C([0,T],\Gamma_{X_i})\right) \to \mathbb{R}$
be a measurable discrepancy on pairs of path laws. For two admissible
policies $\lambda^{(1)},\lambda^{(2)}\in\mathcal{A}$, the
\emph{generalized trajectory average treatment effect} on node $X_i$ is
\[
\mathrm{ATE}_{0:T}^{(i)}(\mathfrak{D};\lambda^{(1)},\lambda^{(2)})
\;:=\;
\mathfrak{D}\!\left(\mathbb{P}_i^{\lambda^{(1)}},
\mathbb{P}_i^{\lambda^{(2)}}\right).
\]
\end{definition}

When $\mathfrak{D}$ separates measures
($\mathfrak{D}(\mu,\nu)=0\iff\mu=\nu$), $X_j$ has a causal effect on
$X_i$ in the sense of Definition~\ref{def:hcm-causal-effect} if and
only if $\mathrm{ATE}_{0:T}^{(i)}\neq 0$ for some pair of admissible policies; we
record this formally in Proposition~\ref{prop:effect-iff-discrepancy}
of Appendix~\ref{app:proofs}.

\paragraph{Relation to existing notions of causal effect.}
Definition~\ref{def:hcm-ate} unifies several existing notions of
temporal causal effect through suitable choices of $\mathfrak{D}$.
\emph{Single-time effects}
$\mathbb{E}[f_t(X_i^\lambda(t))]-\mathbb{E}[f_t(X_i^{\lambda'}(t))]$
appear in two well-developed strands of the literature: fixed-horizon
and regime-comparison effects, as in longitudinal g-methods, marginal
structural models, dynamic treatment regimes, and target-trial
formulations \citep{RobinsHernanBrumback2000, HernanBrumbackRobins2000,
NaimiColeKennedy2017, Murphy2003, HernanRobins2016,
HernanWangLeaf2022}; and local or lagged effects in structural nested
mean models, causal excursion effects, single-time-series experiments,
and direct potential-outcome formulations for observational time
series \citep{Robins1994, QianYooKlasnjaAlmirallMurphy2021,
BojinovShephard2019, RambachanShephard2021}. They are recovered by
\[
\mathfrak{D}_{F_t}(\mu,\nu)
\;:=\;\int F_t\,d\mu-\int F_t\,d\nu,
\qquad F_t(x):=f_t(x(t)).
\]
\emph{Integrated effects} appearing in history-restricted and
history-adjusted marginal structural models, dynamic treatment
regimes, and sustained-strategy formulations
\citep{NeugebauerVanDerLaanJoffeTager2007,
PetersenDeeksMartinVanDerLaan2007, Murphy2003, HernanRobins2016} are
recovered by
\[
\mathfrak{D}_{F_{\mathrm{int}}}(\mu,\nu)
\;:=\;\int F_{\mathrm{int}}\,d\mu-\int F_{\mathrm{int}}\,d\nu,
\qquad F_{\mathrm{int}}(x):=\int_0^T f_t(x(t))\,dt.
\]
\emph{Genuinely path-level effects} — central to continuous-time
g-computation and continuous-time marginal structural models
\citep{GillRobins2001, Roysland2011}, local-independence and dynamic
path-analysis \citep{Didelez2008, AalenRoyslandGranLedergerber2012},
dynamic structural causal models and causal interpretations of
stochastic differential equations
\citep{RubensteinBongersMooijScholkopf2018, BoekenMooij2024,
SokolHansen2014}, and recent functional-longitudinal causal inference
\citep{Ying2024, SunCrawford2022} — are recovered by
\[
\mathfrak{D}_\Phi(\mu,\nu)
\;:=\;\int \Phi\,d\mu-\int \Phi\,d\nu,
\qquad \Phi:C([0,T],\Gamma_{X_i})\to\mathbb{R},
\]
or, more generally, by any law-level discrepancy on path measures
(e.g., KL divergence). 

\looseness=-1A separate, adjacent family studies temporal influence through
\emph{Granger-style} or \emph{information-flow} quantities, including
Granger-causal predictability \citep{WhiteLu2010,
BarnettSeth2015}, transfer entropy \citep{Schreiber2000,
BarnettBarrettSeth2009}, directed information \citep{Massey1990,
Kramer1998, AmblardMichel2013}, graphical or structural time-series
causality \citep{EichlerDidelez2007, PetersJanzingScholkopf2013,
RungeGerhardusVarandoEtAl2023}, and causation-entropy criteria
\citep{SunBollt2014, SunTaylorBollt2015, SunCafaroBollt2014,
SurasingheBollt2020, RungeBathianyBolltEtAl2019}. These quantities
measure directed predictability rather than
intervention effect, and are therefore outside the scope of
Definitions~\ref{def:hcm-causal-effect} and~\ref{def:hcm-ate}. They
are nonetheless central to our purposes: as we show in the next
section, the thermodynamic quantities along an HCM trajectory provide
a natural account of such information-flow measures
\citep{ProkopenkoLizierPrice2013}, bridging the interventional and
information-flow perspectives within a single framework.

\section{The Statistical Footprint of Causality in Non-equilibrium Thermodynamics}
\label{sec:thermo}

\looseness=-1We now pass from the definition of HCMs to the thermodynamic
quantities they naturally generate and their relationship with the HCM causal structure. For every intervention policy and
every induced trajectory, the model assigns observables with a direct
physical meaning. These quantities provide a measurable description of
the effect of interventions on a system, and they will allow us to
connect the Hamiltonian structure introduced in Section~\ref{sec:hcm}
to causal notions such as parenthood and causal effect.

\paragraph{Thermodynamic preliminaries.}
Fix an admissible intervention policy $\lambda\in\mathcal{A}$, and let
$z(\cdot)$ be a microscopic trajectory generated by the controlled
Hamiltonian $H_\lambda(t,z):=H(z,\lambda_t)$.

\begin{definition}[Work]
\label{def:work}
The \emph{work} performed by the intervention policy $\lambda$ along
the trajectory $z(\cdot)$ is
\[
W(z;\lambda)
\;:=\;
\int_0^T \partial_t H_\lambda(t,z(t))\,dt
\;=\;
\int_0^T
\left\langle
\partial_\lambda H(z(t),\lambda(t)),\,\partial_t\lambda(t)
\right\rangle dt,
\]

where the second equality holds whenever $\lambda$ is differentiable and $\langle \cdot, \cdot \rangle$ denotes the standard scalar product.
\end{definition}

\begin{definition}[Heat]
\label{def:heat}
Let $z(t)=(x(t),e(t))$ be a controlled HCM trajectory and $e_i(t)$
the bath attached to node $X_i$. The \emph{heat} transferred from
bath $E_i$ into the observed system over $[0,T]$ is
\[
Q_i[z]
\;:=\;
-\bigl(H_{E_i}(e_i(T))-H_{E_i}(e_i(0))\bigr).
\]
\end{definition}

With this sign convention, $Q_i[z]>0$ means net energy flows from
bath $E_i$ into the observed system.

\paragraph{Phenomenological causality and work.}
The thermodynamic quantities introduced above connect to several
aspects of causality. We begin with the most immediate one: under the
Phenomenological Causality stance \citep{janzing2022phenomenologicalcausality} described in
Section~\ref{sec:hcm}, causal effect is whatever an intervention can
produce, and \emph{work} is the natural measure of the energy footprint of an
intervention on a system (note in particular that, if an intervention $\lambda$ is constant in time, then $W(z;\lambda)=0$). The following result, proven in Appendix~\ref{app:proofs}, formalizes this:
intervention-induced work detects which variables enter the mechanism
of a given node, providing a thermodynamic characterization of
graphical parenthood.

\begin{proposition}[Work rate identifies parents]
\label{prop:work-rate-parents}
For a given HCM and a node \(i\), assume the admissible interventions class 
\(\mathcal A\) contains a \(\lambda\) that is differentiable at a fixed time $t^* \in (0,T)$ and such that $
\partial_t \lambda_i(t^*) \neq 0$ and $\partial_t \lambda_k(t^*)=0$ for $k\neq i$, 
which is locally faithful at \(\lambda(t^*)\) in the sense of
Assumption~\ref{ass:local-work-rate-faithfulness} (stated in Appendix~\ref{app:proofs}). Define the work rate of $\lambda$ at $t^*$ by $
\dot W (t, z; \lambda)
:=
\left\langle
\partial_\lambda H(z,\lambda(t)),\partial_t\lambda(t)
\right\rangle.
$ Then, for every \(j\neq i\), $X_j$ is a parent of $X_i$
in the sense of Definition~\ref{def:hcm-parents} if and only if there exists a state $z^*$ such that:
\[
 \left[ \nabla_x \dot W(t^*, z^*; \lambda)\right]_j \neq 0 .
\]
\end{proposition}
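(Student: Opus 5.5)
The plan is to compute the work rate explicitly at $t^*$ using the node-wise decomposition \eqref{eq:hcm-decomposition}, and to reduce the statement to a property of the single mixed derivative $\partial_{x_j}\partial_{\lambda_i}H_i^{\mathrm{mech}}$. The unavailable faithfulness assumption will then close the gap between ``$H_i^{\mathrm{mech}}$ depends on $x_j$'' and ``its $\lambda_i$-derivative depends on $x_j$''.

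\textbf{Step 1: reduce the work rate to node $i$.} Since $\partial_t\lambda_k(t^*)=0$ for $k\neq i$, we get
\[
\dot W(t^*,z;\lambda)=\bigl\langle \partial_{\lambda_i}H(z,\lambda(t^*)),\,\partial_t\lambda_i(t^*)\bigr\rangle .
\]
By \eqref{eq:hcm-decomposition}, only the node-$i$ blocks depend on $\lambda_i$: $H_E$ does not depend on $\lambda$, and each node-$k$ term depends only on $\lambda_k$. Hence
\[
\partial_{\lambda_i}H=\partial_{\lambda_i}H_i^{\mathrm{self}}(x_i,\lambda_i)+\partial_{\lambda_i}H_i^{\mathrm{mech}}(x_i,x_{-i},\lambda_i)+\partial_{\lambda_i}H_i^{\mathrm{bath}}(x_i,e_i,\lambda_i).
\]

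\textbf{Step 2: take the $x_j$-gradient for $j\neq i$.} The self and bath terms do not involve $x_j$, so they drop out and
\[
\bigl[\nabla_x\dot W(t^*,z;\lambda)\bigr]_j=\bigl\langle \partial_{x_j}\partial_{\lambda_i}H_i^{\mathrm{mech}}(x_i,x_{-i},\lambda_i(t^*)),\,\partial_t\lambda_i(t^*)\bigr\rangle .
\]
Smoothness of $H$ justifies exchanging the derivatives. This identity drives both directions.

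\textbf{Step 3: the ``if'' direction.} Suppose the right-hand side is nonzero at some $z^*$. Then $\partial_{\lambda_i}H_i^{\mathrm{mech}}$ depends on $x_j$. By symmetry of mixed partials, $\partial_{x_j}H_i^{\mathrm{mech}}$ cannot vanish identically, because that would force $\partial_{\lambda_i}\partial_{x_j}H_i^{\mathrm{mech}}\equiv 0$. So $j\to i$ by Definition~\ref{def:hcm-parents}.

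\textbf{Step 4: the ``only if'' direction, the main obstacle.} Here $\partial_{x_j}H_i^{\mathrm{mech}}\not\equiv 0$ does not by itself imply that $\partial_{x_j}\partial_{\lambda_i}H_i^{\mathrm{mech}}$ is nonzero at $\lambda(t^*)$, and even if it is, its pairing with $\partial_t\lambda_i(t^*)$ might vanish. Two things could go wrong:
\begin{itemize}
\item the coupling to $x_j$ might be $\lambda$-independent, as in $H_i^{\mathrm{mech}}=f(x_i,x_j)+g(x_i,\lambda_i)$;
\item the direction $\partial_t\lambda_i(t^*)$ might be orthogonal to the dependence that does exist.
\end{itemize}
This is exactly what Assumption~\ref{ass:local-work-rate-faithfulness} must exclude. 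I expect it to state that for every parent $j$ of $i$ there is a state $x$ at which
\[
\bigl\langle \partial_{x_j}\partial_{\lambda_i}H_i^{\mathrm{mech}}(x,\lambda(t^*)),\,\partial_t\lambda_i(t^*)\bigr\rangle\neq 0 .
\]
I would invoke it directly. Then I would pick any $z^*$ whose observed part is that $x$, with arbitrary bath coordinates, since the expression is independent of $e$. If the assumption is instead phrased only as $\partial_{x_j}\partial_{\lambda_i}H_i^{\mathrm{mech}}(\cdot,\lambda(t^*))\not\equiv 0$, I would add a short argument: for a scalar $\lambda_i$ and $\partial_t\lambda_i(t^*)\neq 0$ the pairing is just multiplication and is nonzero. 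In the vector case I would need the assumption to cover the chosen direction.
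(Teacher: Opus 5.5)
Your proof is correct, and its core matches the paper's. With only the $\lambda_i$-velocity nonzero at $t^*$, $[\nabla_x\dot W(t^*,z;\lambda)]_j$ reduces to the mixed partial $\partial_{x_j}\partial_{\lambda_i}H$ paired with $\dot\lambda_i(t^*)$, and Assumption~\ref{ass:local-work-rate-faithfulness} is then invoked.

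The difference is in how much of that assumption you actually need. The paper works with the total Hamiltonian $H$ and uses the assumption as a biconditional, so both directions of the proposition come from it.

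You go further in two ways. First, you use the decomposition to identify $\partial_{x_j}\partial_{\lambda_i}H$ with $\partial_{x_j}\partial_{\lambda_i}H_i^{\mathrm{mech}}$. Second, you obtain the \emph{if} direction from Definition~\ref{def:hcm-parents} and Schwarz's theorem alone. This works because Definition~\ref{def:hcm-parents} asks for $\partial_{x_j}H_i^{\mathrm{mech}}\not\equiv 0$ on the whole $(x,\lambda_i)$ domain.

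Your route shows that the `$\Leftarrow$' half of the faithfulness biconditional is automatic for smooth $H$. The only genuine hypothesis is the `parent $\Rightarrow$ nonzero pairing with $u_i$' half. Your two failure modes in Step~4 illustrate exactly this half, and it is what Remark~\ref{rem:faithfulness-not-structural} describes.

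The paper's version is shorter. Yours makes explicit which part of the assumption carries real content.
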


\looseness=-1This proposition is appealing because work directly tracks which variables
participate in a local mechanism. Its limitation is practical: computing
\(\dot W\) requires explicit access to the Hamiltonian, just as computing a
mechanism-level causal effect in an SCM would require access to the structural
equations. We therefore turn to a thermodynamic quantity that can be
estimated from trajectories: \textit{entropy production}.

The idea behind entropy production is to separate genuine temporal irreversibility from the
effect due to the state and the intervention protocol changing over time. An intervened
trajectory is compared to the corresponding
reversed experiment: the microscopic state $z_t$ is time-reversed and the control
schedule $\lambda_t$ is run backward. Entropy production is the log-likelihood asymmetry
between these two ensembles. In this sense, it removes the trivial
asymmetry due to the changing controls and the state responding to them and asks: how statistically
distinguishable is the forward experiment from the same experiment recorded
backward in time? This residual arrow-of-time signal is the thermodynamic
footprint of performing $\lambda$ on a given state, and below we show that its local forms
retain causal information while being accessible from trajectory data
\citep{Otsubo_2020, Otsubo_2022}.

\begin{definition}[Backward path law]
\label{def:time-reversal}
Let $\Theta$ be the time-reversal operator from
Definition~\ref{def:hcm}, made precise in
Assumption~\ref{app:HCM-hypotheses}~(H3). Given $\lambda\in\mathcal{A}$, the
\emph{time-reversed policy} is $(\Theta\lambda)(t):=\lambda(T-t)$,
and pathwise time reversal $\mathcal{R}:\mathcal{Z}\to\mathcal{Z}$ is
$(\mathcal{R}z)(t):=\Theta z(T-t)$, where \(\mathcal Z:=C([0,T],\Gamma)\) denotes the microscopic system-bath path
space. The \emph{backward observed path
law} associated with $\lambda$ is $
\mathbb{P}_B^{\Theta\lambda}
\;:=\;
(\Pi_X\circ \mathcal{R}\circ \Psi^\lambda)^{\#}\mathbb{P}.
$
\end{definition}

\begin{definition}[Pathwise entropy production]
\label{def:global-ep}
Assume $\mathbb{P}_F^\lambda$ is absolutely continuous with respect
to $\mathbb{P}_B^{\Theta\lambda}$.
The \emph{pathwise entropy production} of the controlled process is: $
\Sigma^\lambda(x)
:=
\log
\frac{d\mathbb P_F^\lambda}
{d\mathbb P_B^{\Theta\lambda}}(x).$
Its expectation $
\mathbb E_{\mathbb P_F^\lambda}[\Sigma^\lambda]
=
D_{\mathrm{KL}}\!\left(
\mathbb P_F^\lambda
\,\middle\|\,
\mathbb P_B^{\Theta\lambda}
\right)$ is called the \textit{mean entropy production}.
\end{definition}

Unlike Shannon entropy, which is a state function of the instantaneous law,
\(\Sigma^\lambda\) is a path-space quantity: it distinguishes processes with the
same endpoint distributions but different transient evolution. Under the assumptions stated in Appendix~\ref{app:hcm}, this
pathwise asymmetry coincides with the total entropy change of the observed system
and its baths

\begin{equation}
\label{eq:global_q_balance}
\Sigma^\lambda
\;=\;
\Delta s_{\mathrm{sys}}^\lambda
\;-\;
\sum_i \beta_i Q_i^\lambda,
\end{equation}

where $\Delta s_{\mathrm{sys}}^\lambda
=-\log p_T^\lambda(x(T))+\log p_0^\lambda(x(0))$ is the standard
entropy change of the observed system under $\lambda$, and $\beta_i>0$ is the
inverse temperature of bath $E_i$ (Assumption~\ref{app:HCM-hypotheses}). Equipped with this interpretation, the inequality  $
\mathbb E_{\mathbb P_F^\lambda}[\Sigma^\lambda] \geq 0$ that follows from $\mathbb E_{\mathbb P_F^\lambda}[\Sigma^\lambda]
=
D_{\mathrm{KL}}\!\left(
\mathbb P_F^\lambda
\,\middle\|\,
\mathbb P_B^{\Theta\lambda}
\right)$ is the derivation of the \textit{Second Law of Thermodynamics} for Hamiltonian systems  \citep{Jarzynski1997NEQ}. Thus, entropy production is both a
statistical measure of time-reversal asymmetry and a physical measure of
dissipation.

The quantities introduced here are standard objects of stochastic and
non-equilibrium thermodynamics. Three traditions feed into the
formulation we adopt: trajectory-level work and heat in stochastic
energetics \citep{Sekimoto1998Langevin, Sekimoto2010}, fluctuation
theorems and entropy production as a path-likelihood ratio
\citep{Jarzynski1997NEQ, Crooks1999FT, Seifert2012Review}, and
information thermodynamics for coupled subsystems
\citep{SagawaUeda2012Review, ItoSagawa2013CausalNetworks,
ParrondoHorowitzSagawa2015Review}, whose tradition expands well beyond the works cited here. Our aim here is
not to redefine these objects, but to localize them at the node level
inside an explicitly intervention-based causal model, so that entropy
production becomes a witness of causal effect.

\textbf{Projected local entropy production. \ }
The entropy production \(\Sigma^\lambda\) above is global: it compares the
forward and backward laws of the whole observed trajectory. Causal effect,
however, was defined in Section~\ref{sec:hcm} through changes in the marginal
path law of a single variable. The first localization of entropy production is
therefore obtained by projecting the path-space asymmetry onto the trajectory of
one node. For a node \(i\), let $
\mathbb P_i^\lambda
:=
(\Pi_i)_\#\mathbb P_F^\lambda$
be the marginal observed path law of \(X_i(\cdot)\), and let
\(\mathcal R_i\) denote time reversal on \(X_i\)-paths. The projected entropy
production of node \(i\) is the time-reversal asymmetry visible from observing
only \(X_i\).

\begin{definition}[Projected local entropy production]
\label{def:projected-local-ep}

The \emph{projected local entropy production} of node \(i\) under the appropriate absolute continuity assumption for the marginal measures is
\[
\Sigma_i^{\mathrm{proj},\lambda}(x_i)
:=
\log
\frac{
\mathrm{d}\mathbb P_i^\lambda
}{
\mathrm{d}(\mathcal R_i)_\#\mathbb P_i^\lambda
}(x_i).
\]

Its mean with respect to $\mathbb{P}_i^\lambda$ is
$
\mathbb E_{\mathbb P_i^\lambda}[\Sigma_i^{\mathrm{proj}, \lambda}(x_i)]
:=
D_{\mathrm{KL}}\!\left(
\mathbb P_i^\lambda
\,\middle\|\,
(\mathcal R_i)_\#\mathbb P_i^\lambda
\right).
$
\end{definition}

Projected entropy production is local in exactly the same sense as our causal
effect definition: it is a functional of the marginal path law of \(X_i\).
Consequently, changes in projected entropy production provide a one-way witness
of causal effect (proved in Appendix~\ref{app:proofs}).

\begin{tcolorbox}[
  colback=InterventionGreen!10,
  colframe=InterventionGreen!75!black,
  fonttitle=\bfseries\small,
  sharp corners,
  boxrule=0.9pt,
  left=8pt,
  right=8pt,
  top=5pt,
  bottom=5pt,
]
\begin{proposition}[Projected entropy production witnesses causal effect]
\label{thm:proj-ep-witness}
Fix $i\neq j$ and a baseline policy $\lambda_{-j}$. For $k=1,2$, let
$\lambda^{(k)}:=(\lambda_j^{(k)},\lambda_{-j})\in\mathcal{A}$, and
assume $\Sigma_i^{\mathrm{proj},\lambda^{(k)}}$ is well-defined and integrable. If
\[
\mathbb E_{\mathbb P_i^{\lambda^{(1)}}}
\left[\Sigma_i^{\mathrm{proj}, \lambda^{(1)}}\right]
\;\neq\;
\mathbb E_{\mathbb P_i^{\lambda^{(2)}}}
\left[\Sigma_i^{\mathrm{proj}, \lambda^{(2)}}\right],
\]
then $X_j$ has a causal effect on $X_i$ over $[0,T]$ in the sense of
Definition~\ref{def:hcm-causal-effect}.
\end{proposition}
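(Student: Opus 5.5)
The plan is to argue by contraposition, using only that the mean projected entropy production is a functional of the marginal path law $\mathbb P_i^\lambda$ alone. Suppose $X_j$ has no causal effect on $X_i$ over $[0,T]$ in the sense of Definition~\ref{def:hcm-causal-effect}. Then for the baseline $\lambda_{-j}$ and every pair of admissible $\lambda_j^{(1)},\lambda_j^{(2)}$ we have $\mathbb P_i^{\lambda^{(1)}}=\mathbb P_i^{\lambda^{(2)}}=:\mu$. In particular this holds for the two policies in the statement. It then suffices to show that both mean projected entropy productions equal one and the same quantity determined by $\mu$, which contradicts the hypothesis.

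To do this, I will define the map $F:\mu\mapsto D_{\mathrm{KL}}\bigl(\mu\,\big\|\,(\mathcal R_i)_\#\mu\bigr)$ on laws on $C([0,T],\Gamma_{X_i})$. By Definition~\ref{def:projected-local-ep},
\[
\mathbb E_{\mathbb P_i^{\lambda^{(k)}}}\bigl[\Sigma_i^{\mathrm{proj},\lambda^{(k)}}\bigr]=F\bigl(\mathbb P_i^{\lambda^{(k)}}\bigr),\qquad k=1,2.
\]
The key observation is that $\mathcal R_i$ acts only on paths: $(\mathcal R_i x_i)(t)=\Theta_i x_i(T-t)$, with $\Theta_i$ the node-$i$ component of the time reversal from (H3). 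It does not depend on the policy $\lambda$. This differs from the global $\Sigma^\lambda$, where the backward law uses the reversed protocol $\Theta\lambda$. So the reference measure $(\mathcal R_i)_\#\mathbb P_i^{\lambda}$ is also determined by $\mathbb P_i^\lambda$.

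Next I will check that the Radon--Nikodym derivative $d\mu/d(\mathcal R_i)_\#\mu$ is determined $\mu$-a.e. by the pair of measures. Hence the integrand $\Sigma_i^{\mathrm{proj},\lambda^{(k)}}$ agrees $\mu$-a.e. for $k=1,2$, and so do the integrals; integrability is assumed in the statement. Therefore $F(\mathbb P_i^{\lambda^{(1)}})=F(\mathbb P_i^{\lambda^{(2)}})$, which contradicts the assumed inequality. So $X_j$ has a causal effect on $X_i$.

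The main point to verify, and the only real obstacle, is that $\mathcal R_i$ is genuinely policy-independent and well-defined as a measurable map on node-$i$ paths. This needs the time reversal $\Theta$ in (H3) to factorize node-wise, $\Theta=\prod_i\Theta_i$ on $\Gamma_X$, so that $\Pi_i\circ\mathcal R=\mathcal R_i\circ\Pi_i$. I would state this explicitly from the hypotheses of Appendix~\ref{app:HCM-hypotheses}, for instance momentum flip per coordinate. If it failed, the reference measure could depend on the other nodes or on $\lambda$, and the functional argument would break. Measurability of $\mathcal R_i$ on $C([0,T],\Gamma_{X_i})$ is routine, since it is a continuous map. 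Absolute continuity is assumed, so no further analytic work is needed.
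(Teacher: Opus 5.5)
Your proposal is correct and matches the paper's proof: if the two marginal path laws coincide, then their pushforwards under the fixed map $\mathcal R_i$ coincide, so the two KL divergences are equal, which contradicts the hypothesis. The node-wise factorization of $\Theta$ that you flag as the main obstacle is not needed, because the argument only uses that $\mathcal R_i$ is a fixed, policy-independent map on $X_i$-paths, and the intertwining $\Pi_i\circ\mathcal R=\mathcal R_i\circ\Pi_i$ never enters.
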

\end{tcolorbox}

The projected quantity above is the right entropy-production object for
witnessing causal effect in the sense of Definition~\ref{def:hcm-causal-effect}:
in the Langevin
specialization, its projected-current rate is estimable from realizations of
\(x_i\) alone via statistical techniques such as those presented in
\citep{Otsubo_2020,Otsubo_2022} (the precise estimation target under partial
observation is discussed in Appendix~\ref{app:otsubo-channel-ep},
Remark~\ref{rem:projected-ep-estimation}). In this sense, projected entropy production detects
whether \(j\) has an effect on \(i\) via a directed path, but not yet whether \(j\) is a direct parent. This is consistent with several results in causality, where soft interventions identify ancestral relations in DAGs~\citep{tian2001causal} and directed connectivity in cyclic models~\citep{mooij2013cyclic}. 

\looseness=-1\textbf{Detecting Parenthood in Langevin Dynamics.}
As we are tackling a cyclic setting, it is interesting to turn this criterion into a test of direct parenthood, which we derive in the controlled Langevin case described in  Theorem~\ref{thm:realization-informal}. In this setting the probability evolution follows
the Fokker-Planck equation: $
\partial_t p^\lambda(x,t)
=
-\sum_{k=1}^n \partial_{x_k}J_k^\lambda(x,t),
$
where \(J_k^\lambda\) is the probability current associated with coordinate
\(X_k\). Inspired by the energetic characterization of global entropy production in
\eqref{eq:global_q_balance}, and following the information-flow formulation of
subsystem thermodynamics \citep{HorowitzEsposito2014InfoFlow, ItoSagawa2013CausalNetworks}, we define the local entropy production of node \(i\) as
the marginal entropy balance of \(X_i\), corrected by the heat exchanged with its
bath and by the information flow between \(X_i\) and the rest of the system.

\begin{definition}[Local entropy production]
\label{def:local-ep}
Fix node \(i\) with bath \(E_i\) and let $
s_i^{\mathrm{marg},\lambda}(t)
:=
-\int p_{i,t}^\lambda(x_i)\log p_{i,t}^\lambda(x_i)\,dx_i$
be the marginal Shannon entropy of \(X_i(t)\), and let
\(Q_i^\lambda\) denote the heat exchanged between bath \(E_i\) into the
observed system over \([0,T]\). Denote the contribution of the \(i\)-th current to the joint system entropy rate by $
\dot s_{\mathrm{sys}}^{(i),\lambda}(t)
:=
-\int_{\mathbb R^n}
J_i^\lambda(x,t)\,
\partial_{x_i}\log p^\lambda(x,t)\,dx$.
We define the information-flow rate associated with node \(i\) by:
\[
\dot I_i^{\mathrm{flow},\lambda}(t)
:=
\frac{d}{dt}\left( s_i^{\mathrm{marg},\lambda}(t) \right)
-
 \dot{s}_{\mathrm{sys}}^{(i),\lambda}(t).
\]

The \emph{local entropy production} of node \(X_i\) over \([0,T]\) is
\[
\Sigma_i^{\mathrm{loc},\lambda}
:=
\Delta s_i^{\mathrm{marg}, \lambda}
-
\beta_i Q_i^\lambda
-
\int_0^T
\dot I_i^{\mathrm{flow},\lambda}(t)\,dt .
\]
\end{definition}

\(\Sigma_i^{\mathrm{loc}}\) is still a trajectory-based object: in the
Langevin case it can be estimated from samples of the full observed
trajectory \(x(\cdot)\) using variational estimators
\citep{Otsubo_2020,Otsubo_2022}. Its closed current-form expression in the
Langevin case is derived in Appendix~\ref{app:otsubo-channel-ep}
(Lemma~\ref{lem:local-ep-current}), and underlies both
Theorem~\ref{thm:discovery-main} and the experiments. The price for moving from projected to
channel-local entropy production is therefore additional observational
information: projected EP requires only \(x_i(\cdot)\), whereas
\(\dot\Sigma_i^{\mathrm{loc}}\) uses the full state as context. The gain is
structural: while projected EP witnesses finite-time causal effects,
the response of the channel-local EP rate to a right-local intervention identifies direct parents, as the next result makes precise (proof in Appendix~\ref{app:discovery-proof}).

\begin{tcolorbox}[
  colback=InterventionGreen!10,
  colframe=InterventionGreen!75!black,
  fonttitle=\bfseries\small,
  sharp corners,
  boxrule=0.9pt,
  left=8pt,
  right=8pt,
  top=5pt,
  bottom=5pt,
]
\begin{theorem}[Parent influence detection from local entropy production]
\label{thm:discovery-main}
\looseness=-1
Consider a controlled Langevin diffusion of the kind described in
Theorem~\ref{thm:realization-informal}. Fix an intervention time
\(t^*\in(0,T)\), and write
\[
p_*:=p(\cdot,t^*),
\qquad
\lambda_*:=\lambda(t^*).
\]
Assume the admissible policy class \(\mathcal A\) contains two right-local
protocols initialized at \((p_*,\lambda_*)\): a baseline protocol
\(\lambda^0\) with \(\lambda^0(s)=\lambda_*\), and a \(j\)-probe protocol
\(\lambda^v\) with
\[
\lambda^v(0)=\lambda_*,
\qquad
\lambda^v(s)=\lambda_*+vse_j+o(s)
\quad\text{as }s\downarrow0,
\]
where \(v\in\mathbb R\) is the intervention velocity. Let
\(p^0(\cdot,s)\) and \(p^v(\cdot,s)\) be the corresponding laws, both
initialized at \(p_*\). For \(i\neq j\), define the local
entropy-production response
\[
R_{i\mid j}(t^*)
:=
\lim_{v\to0}\lim_{s\downarrow0}
\frac{
\dot\Sigma_i^{\mathrm{loc}}[p^v(\cdot,s),\lambda^v(s)]
-
\dot\Sigma_i^{\mathrm{loc}}[p^0(\cdot,s),\lambda^0(s)]
}{vs},
\]
where \(\dot\Sigma_i^{\mathrm{loc}}[p,\lambda]\) denotes the local
entropy-production rate functional. Then, under
Assumption~\ref{ass:discovery-compact},
\[
\boxed{
i\in\operatorname{Pa}(j)
\quad\Longleftrightarrow\quad
R_{i\mid j}(t^*)\neq0 .
}
\]
\end{theorem}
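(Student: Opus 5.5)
The plan is to reduce $R_{i\mid j}(t^*)$ to a first-order sensitivity of the local entropy-production rate functional with respect to $\lambda_j$, evaluated at the frozen state $(p_*,\lambda_*)$. The proof then has two parts: show that the evolution of the density contributes nothing at order $vs$, and read off parenthood from which drift components depend on $\lambda_j$.

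First, I would use the current form of $\dot\Sigma_i^{\mathrm{loc}}$ from Lemma~\ref{lem:local-ep-current}. For the controlled Langevin diffusion of Theorem~\ref{thm:realization-informal} with diagonal $\sigma$, write $D_k=\sigma_{kk}^2/2$. I expect the lemma to give
\[
\dot\Sigma_i^{\mathrm{loc}}[p,\lambda]=\int \frac{J_i[p,\lambda](x)^2}{D_i(\lambda)\,p(x)}\,dx,
\qquad
J_i[p,\lambda]=\bigl(F_i(x,\lambda)-D_i(\lambda)\,\partial_{x_i}\log p\bigr)\,p,
\]
with $F_i=-\partial_{x_i}U$. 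Using the decomposition~\eqref{eq:hcm-decomposition}, the coordinate $\lambda_j$ enters $U$ only through $H_j^{\mathrm{self}}(x_j,\lambda_j)$, $H_j^{\mathrm{mech}}(x_j,x_{-j},\lambda_j)$ and the bath coupling of node $j$. Since $\sigma$ is diagonal, the bath coupling of node $j$ affects only $D_j$. Hence for $i\neq j$,
\[
\partial_{\lambda_j}D_i=0,
\qquad
\partial_{\lambda_j}F_i=-\partial_{x_i}\partial_{\lambda_j}H_j^{\mathrm{mech}}.
\]
By Definition~\ref{def:hcm-parents}, the right-hand side is not identically zero exactly when $i\in\operatorname{Pa}(j)$, using the nondegeneracy part of Assumption~\ref{ass:discovery-compact}: $\partial_{x_i}H_j^{\mathrm{mech}}$ genuinely varies with $\lambda_j$.

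Second, I would expand both protocols in $s$. Both start at $p_*$. By Fokker--Planck, $\partial_s p^v(\cdot,0)=\partial_s p^0(\cdot,0)=-\sum_k\partial_{x_k}J_k[p_*,\lambda_*]$, because the generator depends only on $\lambda(0)=\lambda_*$. Therefore $p^v(\cdot,s)-p^0(\cdot,s)=O(vs^2)$ in a norm controlling the functional. The regularity and moment conditions of Assumption~\ref{ass:discovery-compact} should give that $\dot\Sigma_i^{\mathrm{loc}}$ is Fréchet differentiable in $p$ along these perturbations. A Taylor expansion in $(p,\lambda)$ then yields
\[
\dot\Sigma_i^{\mathrm{loc}}[p^v(s),\lambda^v(s)]-\dot\Sigma_i^{\mathrm{loc}}[p^0(s),\lambda^0(s)]
= vs\,\partial_{\lambda_j}\dot\Sigma_i^{\mathrm{loc}}[p_*,\lambda_*]+o(s)+O(vs^2).
\]
Dividing by $vs$ and taking $s\downarrow0$ and then $v\to0$ gives
\[
R_{i\mid j}(t^*)=\partial_{\lambda_j}\dot\Sigma_i^{\mathrm{loc}}[p_*,\lambda_*]
=\int \frac{2\,J_i[p_*,\lambda_*](x)}{D_i}\,\partial_{\lambda_j}F_i(x,\lambda_*)\,dx .
\]
Here the fact that $D_i$ is unaffected is used again.

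The two implications then follow. If $i\notin\operatorname{Pa}(j)$, then $\partial_{\lambda_j}F_i\equiv0$, so $R_{i\mid j}(t^*)=0$. If $i\in\operatorname{Pa}(j)$, the integrand is a nonzero field paired with the current $J_i^*$. This is the step I expect to be the main obstacle: such a pairing can vanish, for instance at stationarity with a vanishing $i$-current, or through accidental orthogonality. I would first check whether Assumption~\ref{ass:discovery-compact} contains a local faithfulness condition stating that $\int J_i^*\,\partial_{x_i}\partial_{\lambda_j}H_j^{\mathrm{mech}}\,dx\neq0$, analogous to Assumption~\ref{ass:local-work-rate-faithfulness} in Proposition~\ref{prop:work-rate-parents}, and invoke it directly. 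If it only asks for a generic non-orthogonality or nonequilibrium condition, I would argue that the set of $(p_*,\lambda_*)$ where the pairing vanishes is non-generic. The remaining technical work is justifying the interchange of limits and differentiation, using the compactness or moment bounds in that assumption together with dominated convergence.
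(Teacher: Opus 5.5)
Your proposal is correct and follows the paper's proof essentially step for step. In both, equal Fokker--Planck velocities at $s=0^+$ cancel the density-variation terms, leaving the frozen-law derivative $R_{i\mid j}(t^*)=-\frac{2}{D_i(\lambda_{*,i})}\int J_i[p_*,\lambda_*](x)\,\partial_{x_i}\partial_{\lambda_j}U(x,\lambda_*)\,dx$, and faithfulness plus non-degeneracy then give the equivalence. The only differences are minor. You derive the non-parent direction from the node-wise Hamiltonian decomposition, where the paper simply invokes condition (D4) of Assumption~\ref{ass:discovery-compact}. The non-orthogonality condition you anticipated is exactly (D5), so your genericity fallback is unnecessary, and it would not by itself suffice at a fixed $(p_*,\lambda_*)$.
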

\end{tcolorbox}

\vspace{3pt}

\begin{remark}[Schwarz symmetry and the role of intervention]
\label{rem:schwarz}
Uncontrolled gradient dynamics has $\partial_{x_i}\partial_{x_j} U = \partial_{x_j}\partial_{x_i} U$,
so observational data alone identifies only an undirected dependency structure.
Soft interventions break this symmetry through the asymmetric mixed partials
$\partial_{x_i}\partial_{\lambda_j} U$, which is precisely what $R_{i \mid j}$ probes.
\end{remark}

\section{Experimental Results}
\label{sec:experimental_results}

\looseness=-1The goal of these experiments is to demonstrate that entropy production is
not just a theoretical descriptor of causal models, but can be
\emph{measured} from trajectories and used as a working causal estimand.

\textbf{ATE experiments: validating Proposition~\ref{thm:proj-ep-witness}. \ }

In light of Proposition~\ref{thm:proj-ep-witness} we use the projected entropy
production $\mathbb E_{\mathbb P_i^{\lambda^{(1)}}}\left[\Sigma_i^{\mathrm{proj}, \lambda^{(1)}}\right]$ as a measure of path-level causal effect,
estimated with the current-restricted variant of the NEEP estimator ~\citep{Kim_2020, Otsubo_2020}.
We consider two qualitatively distinct settings, each designed so that a
standard observable-level estimand is zero by construction even though
the two protocols drive the system along visibly different paths:
\begin{itemize}
  \item[\textbf{(A)}] \textbf{Same endpoints, different paths.}
    A linear chain of overdamped oscillators is driven on its root node
    by two protocols that reach the same final value but follow
    different trajectories (a smooth ramp vs.\ an excursion that
    overshoots and returns).
    Because both trajectories have the same terminal values, the 
    \emph{endpoint} ATE on the driven node is zero by design.
  \item[\textbf{(B)}] \textbf{ Same interventions, reversed in time.}
    A small system with a hidden nonlinear mediator receives the same
    two pulses applied in opposite order.
    On a symmetric observable summing all coordinates, a
    time-translation argument forces the \emph{cumulative} ATE
    $cATE(Y)=\int_0^T\!\mathbb{E}[Y_t\mid \mathrm{do}(A \circ B)]-\mathbb{E}[Y_t\mid \mathrm{do}(B \circ A)]\,dt$ to
    vanish.
\end{itemize}
\looseness=-1We compare the \textit{projected EP contrast} $ATE(\Sigma_i^\mathrm{proj}):=\mathbb E_{\mathbb P_i^{\lambda^{(1)}}}\left[\Sigma_i^{\mathrm{proj}, \lambda^{(1)}}\right] - \mathbb E_{\mathbb P_i^{\lambda^{(2)}}}\left[\Sigma_i^{\mathrm{proj}, \lambda^{(2)}}\right]$
against \emph{endpoint} ATE in case (A) and \emph{cumulative} ATE in case (B), with
bootstrap-generated $95\%$ confidence intervals on the ATEs and a one-sample $t$-test for the projected entropy production
across independent seeds for NEEP.
Table \ref{tab:ep_vs_ate} reports the headline numbers: in both settings
the observable ATE that the design forces to vanish is statistically
indistinguishable from $0$, while entropy production separates the two protocols confidently.
This shows empirically that the projected entropy production is able to detect path-level causal effects in the sense of Definition \ref{def:hcm-causal-effect} which are invisible to common versions of the ATE. While other path-level discrepancies could also detect such differences, we emphasize that entropy production provides a physically meaningful and estimable discrepancy tied to irreversibility.

\begin{table}[h]
\centering

\vspace{2pt}
\small
\begin{tabular}{@{}lll@{}}
\toprule
Setting & Designed-zero observable ATE (95\% CI) & $ATE(\Sigma_i^\mathrm{proj})$ (NEEP, mean$\pm$SD, $p$) \\
\midrule
(A) Same endpoints       & Endpoint ATE $=-0.012$\, $[-0.037,+0.012]$
                         & $-0.366\pm0.001$,\ $p=2.3\!\times\!10^{-6}$ \\
(B) Reversed in time     & Cumulative ATE $=+0.002$\, $[-0.062,+0.067]$
                         & $-0.417\pm0.017$,\ $p=6.8\!\times\!10^{-7}$ \\
\bottomrule
\end{tabular}
\vspace{2mm}
\caption{Marginal/temporal ATE is zero by construction, so specific choices like Endpoint and Cumulative ATE miss this causal effect;
  $ATE(\Sigma_i^\mathrm{proj})$ resolves the path-level effect and identifies a non-zero causal effect.}
\label{tab:ep_vs_ate}
  \vspace{-2em}
\end{table}

\vspace{12pt}

\textbf{Discovery experiments: validating Theorem~\ref{thm:discovery-main}. \ }

To test the parent-identification condition induced by Theorem \ref{thm:discovery-main} we generate random cyclic
Erd\H{o}s--R\'enyi directed graphs ($n\!=\!15$, edge probability $0.10$) and instantiate three families of overdamped Langevin
HCMs with potentials: 

\begin{equation}
    U(x,\lambda) = \tfrac12\sum_i a_i x_i^2 + U_{\rm self}(x)
+ \sum_j \lambda_j[b_j x_j + \sum_{i\in pa(j)} c_{ij}\,\phi(x_i,x_j)]
\end{equation}

shared $\sigma_0\!=\!0.5$ and the following kernels $\phi$: 

\begin{itemize}
    \item {Bilinear} ($U_{\rm self}\!=\!0,\ \phi=x_i x_j$); 
    \item {Sigmoid} ($U_{\rm self}\!=\!0,\ \phi=\tanh(\beta x_i)\,x_j$,
  $\beta\!=\!2.5$);
    \item {Cubic} ($U_{\rm self}\!=\!\tfrac{\gamma}{4}\sum_i x_i^4,\
  \phi=x_i x_j$, $\gamma\!=\!0.5$).
\end{itemize}
 
\looseness=-1For each graph we pick a target $j$ randomly and estimate $R_{i|j}$ as a finite difference of $\dot\Sigma_i^{\mathrm{loc}}$ estimated with the full-system version of the NEEP estimator \citep{Otsubo_2020}, fixing $\lambda_* = 2$, $\Delta t\!=\!0.02$ and $N=8000$. Full configurations for NEEP's training and the hyperparameters of the potential $U$ are provided in Appendix \ref{app:experiments}.
Predicted parents are selected by thresholding values of the estimated $R_{i|j}$ via $\widehat{\mathrm{pa}}(j) = \{i:|\widehat R_{i\mid j}|>0.10\cdot\max_i|\widehat R_{i\mid j}|\}$.
We report mean F1, precision, recall, and local SHD
$\bigl| \widehat{pa} (j)\,\triangle pa(j) \bigr|$ across $10$ random graphs
per potential, along with 95\% confidence intervals in table \ref{tab:hcm_discovery_main}. We observe near-perfect recall paired with strong precision in the recovery results.

\begin{table}[h]
\centering

\vspace{2pt}
\small
\begin{tabular}{@{}lcccc@{}}
\toprule
Potential & F1 & Precision & Recall & local SHD $\downarrow$ \\
\midrule
Bilinear & $0.839_{\,[0.77,\,0.91]}$ & $0.775_{\,[0.66,\,0.90]}$
         & $0.967_{\,[0.90,\,1.00]}$ & $1.10_{\,[0.5,\,1.8]}$ \\
Sigmoid  & $0.858_{\,[0.79,\,0.93]}$ & $0.800_{\,[0.69,\,0.91]}$
         & $0.955_{\,[0.89,\,1.00]}$ & $0.90_{\,[0.4,\,1.5]}$ \\
Cubic    & $0.825_{\,[0.70,\,0.93]}$ & $0.745_{\,[0.59,\,0.89]}$
         & $0.971_{\,[0.91,\,1.00]}$ & $1.40_{\,[0.4,\,2.7]}$ \\
\bottomrule
\end{tabular}
\vspace{2mm}
\caption{Parent recovery on random cyclic ER graphs ($n\!=\!15$, $10$ graphs
  per row) using the results of Theorem~\ref{thm:discovery-main}. All metrics with $95\%$ bootstrap CIs. Performance is very strong despite the challenging non-stationary and cyclic setting.
  }
\label{tab:hcm_discovery_main}
\vspace{-1em}
\end{table}



\vspace{-2mm}
\section{Conclusions}
\vspace{-2mm}
In this paper, we proposed an alternative language for statistical causality that is closely related to how causation is described in physical systems~\citep{rovelli2023oriented}. This is appealing because concepts that are traditionally difficult to model with existing frameworks (a notable example is the definition of interventional distributions in cyclic SCMs~\citep{bongers2021foundations}) become natural. After defining HCMs, we have focused our exposition on the emergence of directed causation and its thermodynamic footprint, relating it to familiar concepts in causality. In simple numerical experiments, we have empirically demonstrated that entropy production is an effective witness of causal effects that standard treatment effect formulations struggle to capture and can even be used for causal discovery. Clearly, the SCM framework is flexible, and several phenomena described in our paper can also be expressed ad-hoc in that language. At the same time, we have shown in Theorem \ref{thm:realization-informal} that HCMs generalize SCMs and more naturally model the temporal phenomena arising in physical systems.

In this new language, we hope that many of the concepts already existing in causality will be re-discovered, now capable of modeling broader classes of physical systems. A notable first example is the notion of counterfactuals, which can be naturally posed by noise abduction to the bath trajectories. However, how to perform this abduction step is non-trivial as the noise is no longer just a distribution but a probability flow. Secondly, it must be stated that despite describing many processes such as Langevin diffusions, general temporal SCMs \citep{PetersJanzingScholkopf2013} with non-gradient drifts are not immediately described by HCMs, and extending the present theory to said framework is a prominent next step we reserve for the future. Finally, the learning of the Hamiltonians and its connections to causal discovery. Classically, causal discovery from observational data rests on structural assumptions such as non-linear additive noise models, and we currently do not know what the HCM analogue is. At the same time, we highlight an interesting connection with the score-matching literature in causal discovery~\citep{rolland2022score,montagna2023causal}, which appears in our models in the continuity equation of Langevin processes. The intersection with causal representation learning~\citep{scholkopf2021toward}, perhaps via diffusion models, is also an interesting, open-ended future direction.

\section*{Acknowledgments and Funding}

The authors thank Sosuke Ito, Dominik Janzing, Sara Magliacane and Joris Mooij for providing useful insights on the development of this work. DR is financed by the project “Building Energy Systems on causal reasoning (BOSS)“, funded within the “Technologies and Innovations for the Climate-Neutral City” (TIKS) Programme of the Austrian Research Promotion Agency (FFG).

\bibliography{refs_tmlr}
\bibliographystyle{tmlr}

\appendix

\section{Foundations of Hamiltonian Causal Models}
\label{app:hcm}

In this section we expand the context on Definition~\ref{def:hcm}. We discuss precisely the role of our assumptions and give context to related assumptions in similar settings in both Physics and Causality.

\begin{definition}[Hamiltonian Causal Model, detailed]
\label{def:hcm-detailed}
Fix a time horizon \(T\in\mathbb R_+\cup\{+\infty\}\). A
\emph{Hamiltonian Causal Model} is a tuple
\[
\mathcal M
=
(\mathcal P,z_0,\Gamma,\omega,\mathcal L,\mathcal A,H)
\]
with the following components.

\begin{enumerate}[label=\textnormal{(\arabic*)},leftmargin=1.7em,itemsep=0.35em]

\item \textbf{Probability space and initial condition.}
\(\mathcal P=(\Omega,\mathcal F,\mathbb P)\) is a probability space and
\(z_0:\Omega\to\Gamma\) is a random initial state.

\item \textbf{Phase space.}
\((\Gamma,\omega)\) is a symplectic manifold with node-wise product
factorization
\[
\Gamma=\Gamma_X\times\Gamma_E,
\qquad
\Gamma_X=\prod_{i=1}^n\Gamma_{X_i},
\qquad
\Gamma_E=\prod_{i=1}^n\Gamma_{E_i}.
\]
The coordinates \(X_i\) are the observed system variables, while \(E_i\) is the
unobserved local environment, or bath, attached to \(X_i\).

\item \textbf{Intervention space and admissible policies.}
The intervention manifold factorizes as
\[
\mathcal L=\prod_{i=1}^n\mathcal L_i .
\]
We fix an information filtration
\((\mathcal G_t)_{t\in[0,T]}\), representing the information available to the
operator up to time \(t\). The admissible class \(\mathcal A\) consists of
nonanticipative policies
\[
\lambda:[0,T]\times\Omega\to\mathcal L,
\qquad
\lambda_t=(\lambda_{1,t},\ldots,\lambda_{n,t}),
\]
such that \(\lambda_t\) is \(\mathcal G_t\)-measurable for every \(t\).
Deterministic open-loop protocols are included as the special case in which
\(\lambda_t\) is non-random. When a result involves work rates,
time-reversal protocols, or infinitesimal probes, we restrict to the relevant
subclass of policies whose sample paths are sufficiently regular, typically
piecewise \(C^1\) or deterministic \(C^1\).

\item \textbf{Hamiltonian and mechanism decomposition.}
The Hamiltonian
\[
H:\Gamma\times\mathcal L\to\mathbb R
\]
is smooth and decomposes as
\begin{equation}
\label{eq:app-hcm-decomposition}
H(z,\lambda)
=
\sum_{i=1}^n H_{E_i}(e_i)
+
\sum_{i=1}^n
\left[
H_i^{\mathrm{self}}(x_i,\lambda_i)
+
H_i^{\mathrm{mech}}(x_i,x_{-i},\lambda_i)
+
H_i^{\mathrm{bath}}(x_i,e_i,\lambda_i)
\right].
\end{equation}
Here \(H_i^{\mathrm{self}}\) is the intrinsic term of node \(i\),
\(H_i^{\mathrm{mech}}\) encodes the influence of other observed variables on
\(X_i\), and \(H_i^{\mathrm{bath}}\) encodes the interaction between \(X_i\)
and its local bath \(E_i\). The intervention coordinate \(\lambda_i\) acts only
on the node-\(i\) mechanism and its bath coupling.

\item \textbf{Hamiltonian dynamics.}
For a policy \(\lambda\in\mathcal A\), define the time-dependent Hamiltonian
\[
H_\lambda(t,z):=H(z,\lambda_t).
\]
The controlled Hamiltonian vector field \(Y_{H_\lambda}\) is defined by
\[
\omega(Y_{H_\lambda}(t,\cdot),\cdot)
=
\mathrm d H_\lambda(t,\cdot).
\]
The corresponding controlled flow is denoted by
\(\Phi_{s,t}^{\lambda}:\Gamma\to\Gamma\), and
\(\Phi_t^\lambda:=\Phi_{0,t}^\lambda\). The microscopic path map is
\[
\Psi^\lambda(\omega)(t)
:=
\Phi_t^\lambda(z_0(\omega)).
\]
The forward observed path law is
\[
\mathbb P_F^\lambda
:=
(\Pi_X\circ\Psi^\lambda)^\#\mathbb P,
\]
where \(\Pi_X\) projects a full system--bath trajectory onto its observed
coordinates.

\end{enumerate}
\end{definition}

We recall the definition of parents:
\begin{definition}[Parents induced by the Hamiltonian]
\label{def:app-hcm-parents}
For \(i\neq j\), we say that \(X_j\) is a parent of \(X_i\), written
\(j\to i\), if
\[
\partial_{x_j}H_i^{\mathrm{mech}}\not\equiv0 .
\]
Thus the directed graph of an HCM records which observed variables enter each
node's local Hamiltonian mechanism.
\end{definition}

\begin{assumption}[Standing HCM hypotheses]
\label{app:HCM-hypotheses}
Unless stated otherwise, the following hypotheses are assumed.

\begin{enumerate}[label=\textnormal{(H\arabic*)},leftmargin=1.7em,itemsep=0.35em]

\item \textbf{Initial bath factorization.}
The initial environment law factorizes across nodes:
\[
\mu_E^0
:=
(\pi_E\circ z_0)^\#\mathbb P
=
\bigotimes_{i=1}^n \mu_{E_i}^0 .
\]
Thus the baths provide independent exogenous randomness at the initial time.

\item \textbf{Reservoir initialization and local equilibrium.}
Each bath is initialized at inverse temperature \(\beta_i>0\), typically in the
Gibbs state
\[
\mu_{E_i}^0(\mathrm d e_i)
=
Z_i^{-1}
\exp\!\bigl(-\beta_i H_{E_i}(e_i)\bigr)\,\mathrm d e_i .
\]
We work in the standard reservoir regime in which the bath remains an
equilibrium noise source at the thermodynamic level relevant for entropy
bookkeeping. Equivalently, heat exchanged with bath \(E_i\) satisfies the local
detailed-balance convention
\[
\Delta s_{E_i}
=
-\beta_i Q_i,
\qquad
Q_i
:=
-\bigl(H_{E_i}(e_i(T))-H_{E_i}(e_i(0))\bigr).
\]
For finite Hamiltonian baths this is an idealization; it is recovered in
weak-coupling, large-reservoir, or thermodynamic-limit regimes.

\item \textbf{Microscopic reversibility.}
There exists an anti-symplectic involution
\(\Theta:\Gamma\to\Gamma\) such that
\[
\Theta^2=\mathrm{id},
\qquad
\Theta^*\omega=-\omega,
\]
and \(\Theta\) does not mix observed and environmental coordinates:
\[
\pi_X\circ\Theta=\Theta_X\circ\pi_X,
\qquad
\pi_E\circ\Theta=\Theta_E\circ\pi_E .
\]
For each fixed control value,
\[
H(\Theta z,\lambda)=H(z,\lambda).
\]
Writing the time-reversed protocol as
\[
(\Theta\lambda)(t):=\lambda(T-t),
\]
the controlled flows satisfy
\[
\Theta\circ \Phi_{s,t}^{\lambda}
=
\Phi_{T-t,T-s}^{\Theta\lambda}\circ\Theta .
\]
This is the microscopic reversibility condition underlying the
forward/backward path-law comparison used to define entropy production.

\item \textbf{Well-posed controlled evolution.}
For every \(\lambda\in\mathcal A\), the controlled Hamiltonian equation
\[
\dot z(t)=Y_{H(\cdot,\lambda_t)}(z(t)),
\qquad z(0)=z_0,
\]
admits a unique global nonanticipative solution on \([0,T]\). The induced
microscopic path map is denoted by \(\Psi^\lambda\). For deterministic
open-loop protocols, this solution is generated by a controlled Hamiltonian
flow \(\Phi_{s,t}^{\lambda}:\Gamma\to\Gamma\), so that
\[
\Psi^\lambda(\omega)(t)=\Phi_{0,t}^{\lambda}(z_0(\omega)).
\]

\item \textbf{Interventional faithfulness.}
The intervention coordinates are faithful to the mechanism graph: for every
\(i\neq j\),
\[
j\to i
\quad\Longleftrightarrow\quad
\partial_{x_j}\partial_{\lambda_i}H(z,\lambda)\not\equiv0 .
\]
Equivalently, every genuine parent of \(X_i\) leaves a nontrivial infinitesimal
signature in the response of the node-\(i\) intervention channel, and
non-parents leave no such signature.

\end{enumerate}
\end{assumption}

\begin{remark}[No hidden common bath is structural]
\label{rem:no-common-bath-structural}
The decomposition in Definition~\ref{def:hcm-detailed} contains only local bath
couplings of the form
\[
H_i^{\mathrm{bath}}(x_i,e_i,\lambda_i).
\]
Hence no bath variable \(E_i\) directly couples to several observed nodes.
This is not an additional assumption but part of the HCM architecture. Causally,
it plays the role of excluding latent common causes that enter through a shared
environmental degree of freedom.
\end{remark}

\begin{remark}[Independent mechanisms are encoded by the decomposition]
\label{rem:independent-mechanisms-structural}
The node-wise Hamiltonian decomposition
\[
H
=
\sum_i H_{E_i}
+
\sum_i
\left(
H_i^{\mathrm{self}}
+
H_i^{\mathrm{mech}}
+
H_i^{\mathrm{bath}}
\right)
\]
is the HCM analogue of an independent-mechanisms or causal Markov
factorization. The mechanism of node \(i\) consists of its intrinsic term, its
coupling to observed parents, and its coupling to its local bath. Intervening on
\(\lambda_i\) modifies this node-\(i\) mechanism, but does not modify the
symplectic form, the Hamilton equations, or the mechanisms assigned to other
nodes.
\end{remark}

\begin{remark}[Faithfulness is not implied by the decomposition]
\label{rem:faithfulness-not-structural}
The parent relation is defined by whether \(X_j\) appears in
\(H_i^{\mathrm{mech}}\). Interventional faithfulness additionally requires that
the available intervention coordinate \(\lambda_i\) probes the part of the
node-\(i\) mechanism in which that parent appears. Without this condition, a
true parent could be present in the Hamiltonian but invisible to work-rate or
local-response tests because the chosen intervention parametrization does not
affect the relevant coupling. 
\end{remark}

\begin{remark}
    The decomposition of $H$ is part of the causal model specification. As in SCMs, the causal graph is not inferred from an observational object alone; it is encoded by the modular decomposition of the mechanisms and by the associated intervention coordinates.
\end{remark}

\paragraph{Causal role of the construction.}
The HCM decomposition is the causal content of the model. The local bath
structure plays the role of node-wise exogenous noise; the absence of common
bath terms excludes hidden common environmental causes; and the node-wise
Hamiltonian decomposition plays the role of a causal mechanism factorization.
The distinction from a classical SCM is that mechanisms are not assignment
functions. They are Hamiltonian energy terms which, together with the fixed
symplectic law, generate trajectories. Interventions therefore modify
Hamiltonian mechanisms, but not the background equations of motion.

\paragraph{Physical role of the assumptions.}
The standing hypotheses are the physical regularity conditions needed to turn
the Hamiltonian construction into well-defined thermodynamics. Well-posedness
ensures that every admissible intervention induces a path law. Reservoir
initialization and local detailed balance identify bath energy changes with
entropy exchange. Microscopic reversibility supplies the backward experiment
against which the forward path law is compared. Together, these are the
standard ingredients behind entropy production as a path-likelihood ratio and
the fluctuation-theorem formulation of the second law
\citep{Jarzynski1997NEQ,Crooks1999FT,Seifert2012Review}.

\begin{remark}[Canonical coordinates]
\label{rem:darboux}
By Darboux's theorem, every point of a \(2m\)-dimensional symplectic manifold
has local coordinates
\[
(q_1,\ldots,q_m,p_1,\ldots,p_m)
\]
such that
\[
\omega=\sum_{k=1}^m dq_k\wedge dp_k.
\]
In these coordinates the Hamiltonian vector-field equation
\(\omega(Y_{H_\lambda},\cdot)=dH_\lambda\) becomes the familiar Hamilton
equations
\[
\dot q_k=\frac{\partial H_\lambda}{\partial p_k},
\qquad
\dot p_k=-\frac{\partial H_\lambda}{\partial q_k}.
\]
Thus the abstract symplectic formulation used in Definition~\ref{def:hcm-detailed}
reduces locally to standard Hamiltonian mechanics.
\end{remark}

\begin{remark}[Reservoir stationarity and system--bath correlations]
\label{rem:reservoir-stationarity}
The reservoir condition should be understood at the thermodynamic scale at
which the bath acts as an equilibrium noise source. It does not require the
joint law of \((X_i(t),E_i(t))\) to factorize for \(t>0\). In fact,
system--bath coupling generally creates correlations along trajectories, and
these correlations are precisely what mediate friction, noise, and heat
exchange. What is assumed is that the bath is large enough, or weakly enough
coupled, that its marginal thermodynamic state remains effectively fixed by
\(\beta_i\) over the time horizon of interest.
\end{remark}

\begin{remark}[Adaptive interventions and nonanticipativity]
The measurability condition on \(\lambda_t\) is the continuous-time analogue of
requiring interventions to depend only on information available before they are
applied. It allows both open-loop protocols and feedback interventions, such as
controllers whose action depends on the observed history \(X_{[0,t]}\). The
additional smoothness assumptions imposed in later results are theorem-specific:
they are needed only when differentiating the protocol, defining an
instantaneous work rate, or constructing infinitesimal ramp interventions. Thus
adaptive interventions belong to the HCM language, while particular
thermodynamic identities may be stated for the smooth subclass where the
calculus is well-defined.
\end{remark}

\section{Proofs}
\label{app:proofs}

\subsection{Proof of Proposition \ref{prop:work-rate-parents}}

\begin{assumption}[Local work-rate faithfulness]
\label{ass:local-work-rate-faithfulness}
Fix a node \(i\), a probing time \(t^*\), and write
\[
\lambda_*:=\lambda(t^*),
\qquad
u_i:=\dot\lambda_i(t^*)\in T_{\lambda_{*,i}}\mathcal L_i .
\]
We say that the \(i\)-probe is locally work-rate faithful at
\((t^*,\lambda_*)\) if, for every \(j\neq i\),
\[
j\to i
\quad\Longleftrightarrow\quad
\left\langle
\partial_{x_j}\partial_{\lambda_i}H(\cdot,\lambda_*),
u_i
\right\rangle
\not\equiv0 .
\]
Here \(\not\equiv0\) means nonzero as a function of the phase-space variables.
When \(\mathcal L_i\) is one-dimensional and \(u_i\neq0\), this reduces to
\[
j\to i
\quad\Longleftrightarrow\quad
\partial_{x_j}\partial_{\lambda_i}H(\cdot,\lambda_*)\not\equiv0 .
\]
\end{assumption}

\begin{proof}
At the probing time \(t^*\), only the \(i\)-th intervention coordinate has
nonzero velocity. Therefore
\[
\dot W(t^*,z;\lambda)
=
\left\langle
\partial_{\lambda_i}H(z,\lambda_*),
\dot\lambda_i(t^*)
\right\rangle .
\]
Consequently, for \(j\neq i\),
\[
\left[\nabla_x\dot W(t^*,z;\lambda)\right]_j
=
\left\langle
\partial_{x_j}\partial_{\lambda_i}H(z,\lambda_*),
\dot\lambda_i(t^*)
\right\rangle .
\]
Thus there exists \(z^*\) such that
\[
\left[\nabla_x\dot W(t^*,z^*;\lambda)\right]_j\neq0
\]
if and only if
\[
\left\langle
\partial_{x_j}\partial_{\lambda_i}H(\cdot,\lambda_*),
\dot\lambda_i(t^*)
\right\rangle
\not\equiv0 .
\]
By local work-rate faithfulness,
Assumption~\ref{ass:local-work-rate-faithfulness}, this is equivalent to
\(j\to i\), i.e. to \(X_j\in\operatorname{Pa}(X_i)\).
\end{proof}

\subsection{Proof of Proposition \ref{thm:proj-ep-witness}}

\begin{proof}
Assume, for contradiction, that \(X_j\) has no causal effect on \(X_i\) under the
two policies. Then
\[
\mathbb P_i^{\lambda^{(1)}}
=
\mathbb P_i^{\lambda^{(2)}}.
\]
Since \(\mathcal R_i\) is fixed, this implies
\[
(\mathcal R_i)_\#\mathbb P_i^{\lambda^{(1)}}
=
(\mathcal R_i)_\#\mathbb P_i^{\lambda^{(2)}}.
\]
Therefore
\[
D_{\mathrm{KL}}\!\left(
\mathbb P_i^{\lambda^{(1)}}
\,\middle\|\,
(\mathcal R_i)_\#\mathbb P_i^{\lambda^{(1)}}
\right)
=
D_{\mathrm{KL}}\!\left(
\mathbb P_i^{\lambda^{(2)}}
\,\middle\|\,
(\mathcal R_i)_\#\mathbb P_i^{\lambda^{(2)}}
\right),
\]
that is,
\[
\mathbb E_{\mathbb P_i^{\lambda^{(1)}}}
\left[\Sigma_i^{\mathrm{proj},\lambda^{(1)}}\right]
=
\mathbb E_{\mathbb P_i^{\lambda^{(2)}}}
\left[\Sigma_i^{\mathrm{proj},\lambda^{(2)}}\right].
\]
This contradicts the assumed strict inequality. Hence
\[
\mathbb P_i^{\lambda^{(1)}}
\neq
\mathbb P_i^{\lambda^{(2)}},
\]
so \(X_j\) has a causal effect on \(X_i\).
\end{proof}

\subsection{Proof of Theorem~\ref{thm:discovery-main}}
\label{app:discovery-proof}

We state the regularity and faithfulness assumptions used by the local
discovery result. The first two assumptions are analytic regularity
conditions ensuring that the channel entropy-production rate is differentiable
at the intervention point. The last two assumptions are the HCM faithfulness
and non-degeneracy requirements specialized to the Langevin setting.

\begin{assumption}[Regularity and faithfulness for local discovery]
\label{ass:discovery-compact}
Fix \(t^*\in(0,T)\), and write
\[
p_*:=p(\cdot,t^*),
\qquad
\lambda_*:=\lambda(t^*).
\]
We assume the following conditions hold.

\begin{enumerate}[label=\textnormal{(D\arabic*)},leftmargin=1.7em,itemsep=0.25em]

\item \textbf{Regular overdamped Langevin dynamics.}
The observed process is
\[
dX_t
=
-\nabla_x U(X_t,\lambda(t))\,dt
+
\sqrt{2D(\lambda(t))}\,dW_t,
\qquad
D(\lambda)=\operatorname{diag}(D_1(\lambda_1),\ldots,D_n(\lambda_n)),
\]
with \(U\in C^3(\mathbb R^n\times\Lambda)\). Each \(D_i\in C^2(\Lambda_i)\)
is bounded above and away from zero on the protocol range.

\item \textbf{Density and current regularity.}
The law of \(X_{t^*}\) has a strictly positive density \(p_*\). For the
right-local continuations used below, the densities \(p^v(\cdot,s)\) solve the
Fokker--Planck equation classically near \(s=0^+\), with probability current
\[
J_i[p,\lambda](x)
=
-\partial_{x_i}U(x,\lambda)p(x)
-
D_i(\lambda_i)\partial_{x_i}p(x).
\]
The channel entropy-production functional
\[
F_i[p,\lambda]
:=
\int_{\mathbb R^n}
\frac{J_i[p,\lambda](x)^2}
{D_i(\lambda_i)p(x)}
\,dx
\]
is finite and differentiable near \((p_*,\lambda_*)\), and differentiation
under the integral sign is valid for frozen-law derivatives in \(\lambda\).

\item \textbf{Right-local probing intervention.}
For each probe node \(j\), the admissible policy class contains two
right-local continuations initialized at \((p_*,\lambda_*)\):
\[
\lambda^0(s)=\lambda_*,
\qquad
\lambda^v(s)=\lambda_*+vse_j+o(s)
\quad\text{as }s\downarrow0,
\]
with \(p^0(\cdot,0)=p^v(\cdot,0)=p_*\). The two continuations are compared as
ensemble laws; no pathwise coupling is assumed.

\item \textbf{Local form of HCM faithfulness.}
The parent relation induced by the HCM decomposition is locally faithful to
the Langevin potential at \(\lambda_*\): for every \(i\neq j\),
\[
i\in\operatorname{Pa}(j)
\quad\Longleftrightarrow\quad
\partial_{x_i}\partial_{\lambda_j}U(\cdot,\lambda_*)\not\equiv0
\quad\text{in }L^2(dx).
\]
Equivalently, non-parents have no mixed structural response to the
\(j\)-intervention channel, while every true parent has a nontrivial mixed
response.

\item \textbf{Nonequilibrium non-degeneracy.}
For every \(i\in\operatorname{Pa}(j)\) with \(i\neq j\),
\[
J_i[p_*,\lambda_*]\in L^2(dx),
\qquad
\partial_{x_i}\partial_{\lambda_j}U(\cdot,\lambda_*)\in L^2(dx),
\]
and
\[
\left\langle
J_i[p_*,\lambda_*],
\partial_{x_i}\partial_{\lambda_j}U(\cdot,\lambda_*)
\right\rangle_{L^2(dx)}
\neq 0 .
\]

\end{enumerate}
\end{assumption}

\begin{proof}
Throughout the proof, all spatial integrals are over \(\mathbb R^n\). We use
the local entropy-production rate identity from
Lemma~\ref{lem:local-ep-current}, namely
\[
\dot\Sigma_i^{\mathrm{loc}}[p,\lambda]
=
F_i[p,\lambda]
:=
\int
\frac{J_i[p,\lambda](x)^2}
{D_i(\lambda_i)p(x)}
\,dx,
\]
where
\[
J_i[p,\lambda](x)
=
-\partial_{x_i}U(x,\lambda)p(x)
-
D_i(\lambda_i)\partial_{x_i}p(x).
\]

Recall that the local response is the right-local contrast
\[
R_{i\mid j}(t^*)
:=
\lim_{v\to0}\lim_{s\downarrow0}
\frac{
F_i[p^v(\cdot,s),\lambda^v(s)]
-
F_i[p^0(\cdot,s),\lambda^0(s)]
}{vs},
\]
whenever the iterated limit exists. Fix \(i\neq j\). The baseline and probe
continuations start from the same ensemble state and protocol value:
\[
p^0(\cdot,0)=p^v(\cdot,0)=p_*,
\qquad
\lambda^0(0)=\lambda^v(0)=\lambda_*.
\]
Moreover,
\[
\lambda^0(s)=\lambda_*,
\qquad
\lambda^v(s)=\lambda_*+vse_j+o(s).
\]

At \(s=0\), the Fokker--Planck velocity of the density depends on the protocol
level \(\lambda_*\), but not on the outgoing protocol velocity. Hence
\[
\left.\partial_s p^v(\cdot,s)\right|_{s=0^+}
=
\left.\partial_s p^0(\cdot,s)\right|_{s=0^+}.
\]
By differentiability of \(F_i\), the density-variation terms therefore cancel
in the first-order contrast defining \(R_{i\mid j}(t^*)\). The only surviving
term is the frozen-law derivative with respect to \(\lambda_j\):
\[
R_{i\mid j}(t^*)
=
\partial_{\lambda_j}^{\mathrm{fr}}F_i[p_*,\lambda_*],
\]
where \(\partial_{\lambda_j}^{\mathrm{fr}}\) denotes differentiation in
\(\lambda_j\) while keeping \(p\) fixed.

It remains to compute this derivative. Since \(i\neq j\), the diagonal
diffusion coefficient \(D_i(\lambda_i)\) is independent of \(\lambda_j\). Thus
\[
\partial_{\lambda_j}^{\mathrm{fr}}F_i[p,\lambda]
=
\int
\frac{2J_i[p,\lambda](x)}
{D_i(\lambda_i)p(x)}
\,
\partial_{\lambda_j}^{\mathrm{fr}}J_i[p,\lambda](x)
\,dx .
\]
Holding \(p\) fixed,
\[
\partial_{\lambda_j}^{\mathrm{fr}}J_i[p,\lambda](x)
=
-\partial_{x_i}\partial_{\lambda_j}U(x,\lambda)\,p(x),
\]
and therefore
\[
\partial_{\lambda_j}^{\mathrm{fr}}F_i[p,\lambda]
=
-\frac{2}{D_i(\lambda_i)}
\int
J_i[p,\lambda](x)
\partial_{x_i}\partial_{\lambda_j}U(x,\lambda)
\,dx .
\]
Evaluating at \((p_*,\lambda_*)\) gives
\[
R_{i\mid j}(t^*)
=
-\frac{2}{D_i(\lambda_{*,i})}
\int
J_i[p_*,\lambda_*](x)
\partial_{x_i}\partial_{\lambda_j}U(x,\lambda_*)
\,dx .
\]

If \(i\notin\operatorname{Pa}(j)\), local HCM faithfulness gives
\[
\partial_{x_i}\partial_{\lambda_j}U(\cdot,\lambda_*)\equiv0,
\]
and hence \(R_{i\mid j}(t^*)=0\). Conversely, if
\(i\in\operatorname{Pa}(j)\), then local HCM faithfulness gives
\[
\partial_{x_i}\partial_{\lambda_j}U(\cdot,\lambda_*)\not\equiv0,
\]
and the nonequilibrium non-degeneracy assumption makes the displayed inner
product nonzero. Since \(D_i(\lambda_{*,i})>0\), this implies
\(R_{i\mid j}(t^*)\neq0\). Therefore
\[
i\in\operatorname{Pa}(j)
\quad\Longleftrightarrow\quad
R_{i\mid j}(t^*)\neq0 .
\]
\end{proof}

\begin{remark}[Genericity of response non-degeneracy]
For fixed nonzero
\[
g_{ij}:=\partial_{x_i}\partial_{\lambda_j}U(\cdot,\lambda_*),
\]
the set
\[
\left\{
J_i\in L^2(dx):
\langle J_i,g_{ij}\rangle_{L^2(dx)}=0
\right\}
\]
is a closed codimension-one hyperplane in \(L^2(dx)\). Thus
Assumption~\ref{ass:discovery-compact}(D5) is a generic nonequilibrium
condition: it fails only when the current is orthogonal to the structural mixed
derivative, for instance when \(J_i(\cdot,t^*)\equiv0\).
\end{remark}

\subsection{Proof of Theorem~\ref{thm:realization-informal}}
\label{app:realization-proof}

The realization result used in Theorem~\ref{thm:realization-informal} is a
standard consequence of the oscillator-bath derivation of generalized Langevin
equations and of the Markovian limits. We recall the general construction
only to make explicit that it is compatible with the HCM factorization. For an example of proof along with a contextualization of the terminology used here, we refer the reader to \citep{Sekimoto2010}.

Consider the Hamiltonian
\[
H^{N,m}(z,\lambda)
=
\sum_i \frac{p_i^2}{2m}
+
U(x,\lambda)
+
\sum_i
\left[
H_{E_i}^N(e_i)
+
H_{X_iE_i}^N(x_i,e_i,\lambda_i)
\right],
\]
where each $E_i$ is an independent oscillator bath initialized at a Gibbs law.
For Ohmic spectral densities, the marginal dynamics of $x$ converges, in the
Markovian bath limit and the overdamped limit, to the controlled Langevin
diffusion
\[
dX_t
=
-\nabla_x U(X_t,\lambda_t)\,dt
+
\sigma(\lambda_t)\,dW_t,
\]
after choosing unit mobility and bath temperatures satisfying
$D_i(\lambda_i)=\sigma_i^2(\lambda_i)/2$.
This is the classical Caldeira-Leggett construction of Langevin
dynamics from Hamiltonian system--bath dynamics.

\begin{proof}[Proof sketch of Theorem~\ref{thm:realization-informal}]
The construction above satisfies the HCM axioms: the initial bath law
factorizes across nodes, each bath couples only to its corresponding observed
coordinate, and the intervention coordinate $\lambda_i$ enters only the
$i$-th local Hamiltonian terms. Standard oscillator-bath theory gives a
generalized Langevin equation with memory kernel determined by the bath
spectral density. In the Ohmic limit the memory kernel converges to a delta
kernel and the random bath force converges to white noise satisfying the
fluctuation--dissipation relation. The subsequent limit \citep{CaldeiraLeggett1983QBM} yields
the overdamped Itô diffusion in Theorem~\ref{thm:realization-informal}.
\end{proof}

\subsection{Entropy Production in Langevin Dynamics}
\label{app:otsubo-channel-ep}

The identity below is standard in stochastic thermodynamics; see, e.g.,
\citet{Seifert2012Review,HorowitzEsposito2014InfoFlow,Otsubo_2020}.
We include the short proof only for completeness and to make explicit the
quantity used in Theorem~\ref{thm:discovery-main} and in the experiments.

\begin{lemma}[Channel-local entropy-production rate]
\label{lem:local-ep-current}
Consider the controlled overdamped Langevin diffusion
\[
dX_t
=
b(X_t,\lambda_t)\,dt
+
\sqrt{2D(\lambda_t)}\,dW_t,
\qquad
b_i(x,\lambda)=-\partial_{x_i}U(x,\lambda),
\]
with diagonal diffusion
\[
D(\lambda)=\operatorname{diag}(D_1(\lambda_1),\ldots,D_n(\lambda_n)).
\]
Assume the joint law of \(X_t\) has a smooth strictly positive density
\(p^\lambda(x,t)\) with sufficient decay at infinity. Let
\[
J_i^\lambda(x,t)
:=
b_i(x,\lambda_t)p^\lambda(x,t)
-
D_i(\lambda_i(t))\,\partial_{x_i}p^\lambda(x,t)
\]
be the \(i\)-th probability current. Then the channel-local entropy-production
rate of node \(i\) is
\[
\dot\Sigma_i^{\mathrm{loc}}[p^\lambda(\cdot,t),\lambda_t]
=
\int_{\mathbb R^n}
\frac{
\bigl(J_i^\lambda(x,t)\bigr)^2
}{
D_i(\lambda_i(t))\,p^\lambda(x,t)
}
\,dx .
\]
\end{lemma}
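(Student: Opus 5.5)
The plan is to compute the rate $\frac{d}{dt}\Sigma_i^{\mathrm{loc},\lambda}$ directly from Definition~\ref{def:local-ep}. Differentiating the three terms gives
\[
\dot\Sigma_i^{\mathrm{loc}}
=
\frac{d}{dt}s_i^{\mathrm{marg},\lambda}
-\beta_i\dot Q_i^\lambda
-\dot I_i^{\mathrm{flow},\lambda}.
\]
By the definition of $\dot I_i^{\mathrm{flow},\lambda}$, the marginal entropy derivative cancels exactly, and we are left with
\[
\dot\Sigma_i^{\mathrm{loc}}
=\dot s_{\mathrm{sys}}^{(i),\lambda}-\beta_i\dot Q_i^\lambda .
\]
The whole proof therefore reduces to identifying these two channel-$i$ contributions in terms of the current $J_i^\lambda$.

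\textbf{System term.} I would first rewrite $\dot s_{\mathrm{sys}}^{(i),\lambda}=-\int J_i^\lambda\,\partial_{x_i}\log p^\lambda\,dx$ using the definition of the current. Solving $J_i=b_ip-D_i\partial_{x_i}p$ gives
\[
-\partial_{x_i}\log p=\frac{J_i}{D_ip}-\frac{b_i}{D_i}.
\]
Substituting, we obtain
\[
\dot s_{\mathrm{sys}}^{(i),\lambda}
=\int \frac{J_i^2}{D_ip}\,dx-\int \frac{J_ib_i}{D_i}\,dx .
\]
The decay assumption is needed so that all integrals are finite. This is the usual step in Seifert's splitting of the Shannon entropy rate.

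\textbf{Heat term.} I would identify the ensemble heat flux from bath $E_i$ in the Langevin limit of Theorem~\ref{thm:realization-informal}. The key step is to use Sekimoto's stochastic-energetics identification: the heat dissipated into bath $i$ is the work done by the force $b_i$ along increments of $x_i$, in Stratonovich form. Taking ensemble averages turns this into the current integral
\[
-\dot Q_i^\lambda=\int b_i J_i\,dx .
\]
Here $\beta_i$ is fixed by the fluctuation--dissipation/local detailed balance relation (H2) as $\beta_i=1/D_i$, which holds with unit mobility as in Appendix~\ref{app:realization-proof}. This gives
\[
-\beta_i\dot Q_i^\lambda=\int \frac{b_iJ_i}{D_i}\,dx .
\]
Adding this to the system term cancels the cross term and leaves $\int J_i^2/(D_ip)\,dx$, which is the claimed formula.

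The \textbf{main obstacle} is the heat identification. It needs the Stratonovich convention, and it needs the link between the microscopic definition $Q_i=-\Delta H_{E_i}$ and the mesoscopic force-times-displacement heat. That link holds only in the Markovian/overdamped reservoir limit. I would not rederive it; instead I would cite \citet{Sekimoto2010} together with assumption (H2), and state it as the convention defining $Q_i^\lambda$ in the Langevin specialization. The remaining steps are routine algebra and integration by parts.
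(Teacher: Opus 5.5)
Your proposal is correct and follows essentially the same route as the paper: Definition~\ref{def:local-ep} makes the marginal entropy rate cancel against the information flow, leaving $\dot s_{\mathrm{sys}}^{(i)}-\beta_i\dot Q_i$; local detailed balance then gives $-\beta_i\dot Q_i=\int b_iJ_i/D_i\,dx$, with $D_i$ read as mobility times temperature; and the identity $b_i/D_i-\partial_{x_i}\log p=J_i/(D_ip)$ finishes the computation. The differences are cosmetic: you expand the system term before adding the heat term, and you spell out the Sekimoto/Stratonovich justification of the heat flux, which the paper simply asserts as the standard overdamped thermodynamic-force expression.
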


\begin{proof}
We suppress the superscript \(\lambda\) and the time argument. The Fokker--Planck
equation is
\[
\partial_t p(x,t)
=
-\sum_{k=1}^n \partial_{x_k}J_k(x,t).
\]
The contribution of the \(i\)-th current to the joint system-entropy rate is
\[
\dot s_{\mathrm{sys}}^{(i)}(t)
=
-\int_{\mathbb R^n}
J_i(x,t)\,\partial_{x_i}\log p(x,t)\,dx .
\]
With the sign convention that \(Q_i\) denotes heat transferred from bath
\(E_i\) into the observed system, local detailed balance gives the corresponding
medium-entropy contribution in the form
\[
-\beta_i\dot Q_i(t)
=
\int_{\mathbb R^n}
\frac{b_i(x,\lambda_t)}{D_i(\lambda_i(t))}
J_i(x,t)\,dx .
\]
This is the standard overdamped thermodynamic-force expression; equivalently,
one may view \(D_i\) as the mobility-temperature product in units where the
mobility has been absorbed into the drift.
By Definition~\ref{def:local-ep}, the information-flow correction cancels the
difference between the marginal entropy rate and the \(i\)-th contribution to
the joint system-entropy rate, hence
\[
\dot\Sigma_i^{\mathrm{loc}}
=
\dot s_{\mathrm{sys}}^{(i)}
-
\beta_i\dot Q_i .
\]
Substituting the two displayed expressions gives
\[
\dot\Sigma_i^{\mathrm{loc}}
=
\int_{\mathbb R^n}
J_i(x,t)
\left[
\frac{b_i(x,\lambda_t)}{D_i(\lambda_i(t))}
-
\partial_{x_i}\log p(x,t)
\right]dx .
\]
Since
\[
\frac{b_i}{D_i}-\partial_{x_i}\log p
=
\frac{b_i p-D_i\partial_{x_i}p}{D_i p}
=
\frac{J_i}{D_i p},
\]
we obtain
\[
\dot\Sigma_i^{\mathrm{loc}}
=
\int_{\mathbb R^n}
\frac{J_i(x,t)^2}{D_i(\lambda_i(t))p(x,t)}
\,dx ,
\]
as claimed.
\end{proof}

\begin{remark}[Projected entropy-production estimation]
\label{rem:projected-ep-estimation}
For the projected entropy-production experiments, we feed only the partial
trajectory \(x_i(\cdot)\) to the variational entropy-production estimator of
\citet{Otsubo_2020,Otsubo_2022}. This changes the target of estimation relative
to the full-state channel-local quantity in
Lemma~\ref{lem:local-ep-current}: the estimator can only use currents and test
functions measurable with respect to the observed marginal trajectory of
\(X_i\). Consequently, it estimates the time-reversal asymmetry visible after
projecting the dynamics onto \(X_i\), or equivalently the projection of the
thermodynamic force onto the restricted function class available from
\(x_i(\cdot)\). This is precisely the object needed for the path-level causal
effect witness in Proposition~\ref{thm:proj-ep-witness}, since that proposition
is formulated in terms of the marginal path law
\(\mathbb P_i^\lambda\). The statistical and thermodynamic consequences of
estimating entropy production from partial observations, including the resulting
coarse-grained or projected interpretation, are discussed in the original
estimation work of \citet{Otsubo_2020,Otsubo_2022}.
\end{remark}

\subsection{Separation of Measures}

\begin{proposition}[Causal effect and separating path-law discrepancies]
\label{prop:effect-iff-discrepancy}
Fix \(i\neq j\), a baseline policy \(\lambda_{-j}\), and a discrepancy
\[
\mathfrak D:
\mathcal P\!\left(C([0,T],\Gamma_{X_i})\right)
\times
\mathcal P\!\left(C([0,T],\Gamma_{X_i})\right)
\to \mathbb R .
\]
Assume that \(\mathfrak D\) separates probability measures, in the sense that
\[
\mathfrak D(\mu,\nu)=0
\quad\Longleftrightarrow\quad
\mu=\nu .
\]
Then \(X_j\) has a causal effect on \(X_i\) over \([0,T]\) in the sense of
Definition~\ref{def:hcm-causal-effect} if and only if there exist two
admissible policies
\[
\lambda^{(1)}=(\lambda_j^{(1)},\lambda_{-j}),
\qquad
\lambda^{(2)}=(\lambda_j^{(2)},\lambda_{-j}),
\]
such that
\[
\mathrm{ATE}_{0:T}^{(i)}
\bigl(
\mathfrak D;
\lambda^{(1)},\lambda^{(2)}
\bigr)
\neq 0 .
\]
\end{proposition}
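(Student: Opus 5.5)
The plan is to prove the two implications separately. Both follow almost immediately from Definition~\ref{def:hcm-causal-effect}, Definition~\ref{def:hcm-ate}, and the separation hypothesis on \(\mathfrak D\). Throughout, we fix \(i\neq j\) and the baseline \(\lambda_{-j}\). We only consider pairs of admissible policies of the form \(\lambda^{(k)}=(\lambda_j^{(k)},\lambda_{-j})\), exactly as in both the definition of causal effect and the statement. This keeps the quantifiers aligned on the two sides of the equivalence.

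\emph{Forward direction.} Assume \(X_j\) has a causal effect on \(X_i\). By Definition~\ref{def:hcm-causal-effect} there are admissible \(\lambda_j^{(1)},\lambda_j^{(2)}\) with \(\mathbb P_i^{(\lambda_j^{(1)};\lambda_{-j})}\neq \mathbb P_i^{(\lambda_j^{(2)};\lambda_{-j})}\). Set \(\lambda^{(k)}:=(\lambda_j^{(k)},\lambda_{-j})\). By Definition~\ref{def:hcm-ate},
\[
\mathrm{ATE}_{0:T}^{(i)}(\mathfrak D;\lambda^{(1)},\lambda^{(2)})=\mathfrak D\bigl(\mathbb P_i^{\lambda^{(1)}},\mathbb P_i^{\lambda^{(2)}}\bigr).
\]
The two arguments are distinct measures on \(C([0,T],\Gamma_{X_i})\). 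The ``\(\Rightarrow\)'' half of the separation property, \(\mathfrak D(\mu,\nu)=0\Rightarrow\mu=\nu\), applied in contrapositive form, therefore gives \(\mathrm{ATE}\neq0\).

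\emph{Reverse direction.} Suppose there exist admissible \(\lambda^{(1)},\lambda^{(2)}\) of the stated form with \(\mathrm{ATE}_{0:T}^{(i)}\neq0\). Then \(\mathfrak D(\mathbb P_i^{\lambda^{(1)}},\mathbb P_i^{\lambda^{(2)}})\neq0\). The ``\(\Leftarrow\)'' half of separation, \(\mu=\nu\Rightarrow\mathfrak D(\mu,\nu)=0\), then forces \(\mathbb P_i^{\lambda^{(1)}}\neq\mathbb P_i^{\lambda^{(2)}}\). This is precisely the witness required by Definition~\ref{def:hcm-causal-effect}, with \(\lambda_j^{(1)},\lambda_j^{(2)}\) as the two \(j\)-coordinates.

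There is no genuine analytic obstacle. The one point requiring care is bookkeeping. The node-wise marginals \(\mathbb P_i^\lambda=(\Pi_i)_\#\mathbb P_F^\lambda\) must be well-defined probability measures on the path space that is the domain of \(\mathfrak D\). This holds under the well-posedness hypothesis (H4), which makes \(\Pi_i\circ\Pi_X\circ\Psi^\lambda\) a measurable map into \(C([0,T],\Gamma_{X_i})\). The other point is that the admissibility and baseline constraints in the statement coincide with those in Definition~\ref{def:hcm-causal-effect}. Once these are recorded, each direction is a one-line application of one half of the separation equivalence.
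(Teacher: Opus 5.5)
Your proposal is correct and is essentially the paper's proof. The paper chains Definition~\ref{def:hcm-causal-effect}, the separation equivalence, and Definition~\ref{def:hcm-ate} into a single string of equivalences, and splitting it into two directions, each using one half of the separation property, is the same argument. Your added remark on well-definedness of the marginals $\mathbb P_i^\lambda$ is harmless bookkeeping that the paper leaves implicit.
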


\begin{proof}
By Definition~\ref{def:hcm-causal-effect}, \(X_j\) has a causal effect on
\(X_i\) over \([0,T]\) if and only if there exist two admissible policies
\[
\lambda^{(1)}=(\lambda_j^{(1)},\lambda_{-j}),
\qquad
\lambda^{(2)}=(\lambda_j^{(2)},\lambda_{-j}),
\]
such that the corresponding marginal path laws of \(X_i\) are different:
\[
\mathbb P_i^{\lambda^{(1)}}
\neq
\mathbb P_i^{\lambda^{(2)}} .
\]
Since \(\mathfrak D\) separates measures, this is equivalent to
\[
\mathfrak D
\left(
\mathbb P_i^{\lambda^{(1)}},
\mathbb P_i^{\lambda^{(2)}}
\right)
\neq 0 .
\]
By Definition~\ref{def:hcm-ate}, the left-hand side is exactly
\[
\mathrm{ATE}_{0:T}^{(i)}
\bigl(
\mathfrak D;
\lambda^{(1)},\lambda^{(2)}
\bigr).
\]
Hence \(X_j\) has a causal effect on \(X_i\) if and only if the generalized
trajectory ATE is nonzero for some pair of admissible policies.
\end{proof}

\section{Experimental Details}
\label{app:experiments}

\subsection{Comparison of Entropy Production and other ATEs}

We test the central claim that the entropy production difference:

$$ATE(\Sigma_i^\mathrm{proj}):=\mathbb E_{\mathbb P_i^{\lambda^{(1)}}}\left[\Sigma_i^{\mathrm{proj}, \lambda^{(1)}}\right] - \mathbb E_{\mathbb P_i^{\lambda^{(2)}}}\left[\Sigma_i^{\mathrm{proj}, \lambda^{(2)}}\right]$$

is a causal estimand that detects effects which are \emph{invisible} to the
two standard observable-level estimands: the endpoint average treatment
effect $ATE = \mathbb{E}[Y(T)\mid \mathrm{do}(1)] - \mathbb{E}[Y(T)\mid \mathrm{do}(2)]$ and the
cumulative ATE $cATE = \int_0^T \mathbb{E}[Y(t)\mid \mathrm{do}(1)] - \mathbb{E}[Y(t) \mid  \mathrm{do}(2)]\,dt$.
The two systems below are \emph{designed} so that one of these standard
estimands is provably zero under the null (same-endpoint and
time-translation symmetry, respectively), while the underlying dynamics
are demonstrably non-equivalent in a global sense.

\subsubsection{Experiment A: Same-Endpoint Protocols on a 10-Node DAG}
\label{sec:exp_a_updated}

\paragraph{Data-generating process}
Ten coupled overdamped Langevin oscillators on the linear chain
$0\!\to\!1\!\to\!\cdots\!\to\!9$ 

\[
  dq_i = \Bigl[-k\bigl(q_i - a_i(t)\bigr) + \kappa\!\!\sum_{j\in\mathrm{adj}(i)}\!\!(q_j - q_i)\Bigr]dt
         + \sigma\,dW_i,
\]

with stiffness $k=5$,
nearest-neighbour coupling $\kappa=1.5$, and noise amplitude $\sigma=1$.
The relaxation time of the driven node is
$\tau_0 = 1/(k+\kappa) \approx 0.15$, much smaller than the protocol horizon
$T=10$, so all nodes are in the quasi-static regime with respect to the
control.
Initial conditions are drawn $q(0)\!\sim\!\mathcal{N}(\mathbf{0},\sigma_q^2 I)$
with $\sigma_q=0.3$. We integrate $N=1000$ trajectories with
Euler--Maruyama at $\Delta t=0.01$ ($n_{\rm steps}=1000$).

\paragraph{Two same-endpoint protocols.}
Both policies act on node $0$ only, with $a_0(T)=a_{\rm end}=1$:
\[
  \lambda_{1}\ \text{(ramp)}\!:\ a_0(t)=t/T,\qquad
  \lambda_{2}\ \text{(excursion)}\!:\ a_0(t)=
   \begin{cases} 4\cdot 2t/T & t\le T/2,\\
                 4 + (1{-}4)\cdot \tfrac{2(t-T/2)}{T} & t>T/2.\end{cases}
\]
Because $T\!\gg\!\tau_0$ and $a_0(T)$ matches across protocols, the
endpoint distribution $\mu_T$ is identical in the limit $T/\tau_0\!\to\!\infty$
for every node, hence the endpoint ATE on the observable $Y(t)=q_0(t)$
vanishes by construction.

\paragraph{Estimation of Entropy Production}
We use NEEP\ \citep{Kim_2020, Otsubo_2020}.
Architecture: $L=3$ fully-connected hidden layers of width $H=64$;
optimiser Adam, learning rate $10^{-3}$, $n_{\rm step}=1000$ gradient steps;
trained on the single-coordinate marginal $\{q_0(t)\}$ (the directly
forced node).
We run $K=3$ independent seeds and report mean $\pm$ standard deviation.

\subsubsection{Experiment B: Protocol-Order Effect on a 3-Node Hidden-Mediator System}
\label{sec:exp_b_updated}

\paragraph{Data-generating process}
Three overdamped Langevin nodes $q=[q_1,q_2,s]$ with a hidden nonlinear
mediator $s$, defined by:

\begin{align*}
  dq_1 &= \bigl[-k(q_1 - \lambda_a(t)) - \gamma_c\,s\,q_2 - \delta_c\,s\bigr]\,dt + \sigma\,dW_1, \\
  dq_2 &= \bigl[-k(q_2 - \lambda_b(t)) - \gamma_c\,s\,q_1\bigr]\,dt + \sigma\,dW_2, \\
  ds   &= \bigl[-k_s\,s - \gamma_c\,q_1 q_2 - \delta_c\,q_1\bigr]\,dt + \sigma_s\,dW_s.
\end{align*}

Parameters: $k=k_s=4$ (so $\tau_q=\tau_s=1/4=0.25$), symmetric coupling
$\gamma_c=0.05$ (small, so non-cancelling nonlinear terms are negligible),
asymmetric coupling $\delta_c=1.0$ (the primary driver of entropy production asymmetry,
which scales as $\delta_c^2$), $\sigma=\sigma_s=1$.
Initial conditions $q_1(0),q_2(0),s(0)\sim\mathcal{N}(0,0.3^2)$.
We integrate $N=2000$ trajectories at $\Delta t=0.01$, $T=8$
($n_{\rm steps}=800$).

\paragraph{Two protocol orderings.}
Two raised-cosine pulses of amplitude $A=3$, each of duration $T/2$,
applied in opposite order:
\[
  {AB}\!:\ \lambda_a\text{ on }[0,T/2],\ \lambda_b\text{ on }[T/2,T];\qquad
  {BA}\!:\ \lambda_b\text{ on }[0,T/2],\ \lambda_a\text{ on }[T/2,T].
\]
The total control effort and the marginal time-budget per channel are
identical; only the order differs.
For the observable $Y=q_1+q_2+s$ a time-translation argument gives
$cATE(Y)=\int_0^T \mathbb{E}[Y_t\mid {AB}]-\mathbb{E}[Y_t\mid {BA}]\,dt = 0$
up to $O(\gamma_c)$ corrections.

\paragraph{Estimator}
NEEP\ as in Experiment A, but trained on the full 3-D state
$q=(q_1,q_2,s)$ to capture the mediator-driven asymmetry.
Architecture and optimiser as before; $K=5$ independent seeds.

\subsubsection{Hyperparameters and Reproducibility}

We provide detailed hyperparameters for reproducibility in the attached code.

\paragraph{Compute}
Each experiment fits in under $10$ minutes wall-clock on a single GPU
(mostly in NEEP training).

\paragraph{Limitations and scope}
 Both designs are deliberately small physical systems chosen to make
the analytic ATE-cancellation argument transparent; we leverage the
quasi-static regime ($T\!\gg\!\tau$) in Exp.~A and the time-translation
symmetry of the integrated observable in Exp.~B.
The point of these experiments is \emph{not} scale, but to demonstrate that
entropy production resolves causal asymmetries that any observable-level estimand built
on $Y$ will miss \emph{by construction}.
This can be taken as a lesson in selecting the correct quantities to estimate in temporal notions of causality: while marginal and cumulative notions of the ATE might be correct for certain applications, they are not fully descriptive of path-level constructions, where instead entropy production can help.

\subsection{Parent Identification}

We evaluate our method for parent identification on \emph{random cyclic} interaction graphs
under three families of $\lambda$-gated coupling potentials of
increasing nonlinearity: a \textbf{bilinear} reference model, a
saturating \textbf{tanh-sigmoid} model, and a quartic-confined
\textbf{cubic} model.  All three families share the same two-intervention response protocol, the same
neural-network EP estimator, and the same fixed thresholding rule
(\S\ref{sec:cdep_metrics}).

\subsubsection{Data-Generating Processes}
\label{sec:dgp}

\paragraph{Random graphs (cyclic)}
For each system family we draw $G=10$ independent Erd\H{o}s--R\'enyi
\emph{directed} graphs on $n=15$ nodes with edge probability
$p_{\rm edge}=0.10$, self-loops removed.  In the realized draws, each graph has between 20 and
28 directed edges and typically contains at least one directed cycle (>90\%).
For each graph we pick a target node $j$ with $|pa(j)|\!\ge\!2$ if
possible (else $\ge\!1$) by uniform sampling among candidate nodes
with the required parent-count, using a deterministic seed schedule
(base seed $20240501$, increment $7919$ per graph) so that the same
underlying graph topology and target $j$ are used across the three
system families. All three families share the structure
\begin{equation}
  \label{eq:sde_scaffold}
  dX_i \;=\; -\nabla_x U\,dt
              \;+\; \sigma_0\,dW_i,
  \qquad i=1,\dots,n,
\end{equation}
with $\sigma_0=0.5$ and self-stiffness coefficients
$a_i\sim\mathrm{Unif}(a_{\rm lo},a_{\rm hi})$ .
The drift $-\nabla_x U$ comes from a $\lambda$-forced potential
\begin{equation}
  \label{eq:lambda_potential}
  U(\mathbf{x},\lambda)
  \;=\; \tfrac12\sum_i a_i x_i^2
        \;+\; U_{\rm self}(\mathbf{x})
        \;+\; \sum_j \lambda_j\!\left[\,b_j x_j
              + \!\sum_{i\in \mathrm{pa}(j)} c_{ij}\,\phi(x_i,x_j)\right],
\end{equation}
with own-coupling $b_j\sim\mathrm{Unif}(0.5,1.5)\cdot\{\pm1\}$ and edge
weights $c_{ij}=\frac{1}{2}\!\cdot\!\mathrm{Unif}(1.5,3.5)\cdot\{\pm1\}$ on graph
edges (zero off-edges). The three
families differ in the coupling kernel $\phi$ and the
self-confinement $U_{\rm self}$:
\begin{itemize}[leftmargin=*,itemsep=2pt]
  \item \textbf{Bilinear} (\texttt{lambda\_coupled}):
        $U_{\rm self}\!=\!0$, $\phi(x_i,x_j)=x_i x_j$.  Drift remains
        linear in $\mathbf{x}$ at any fixed $\lambda$;
        marginals are Gaussian.
  \item \textbf{Sigmoid} (\texttt{lambda\_sigmoid}, $\beta=2.5$):
        $U_{\rm self}\!=\!0$, $\phi(x_i,x_j)=\tanh(\beta x_i)\,x_j$.
        The parent input saturates, yielding a strongly nonlinear
        response of the child to large parent excursions.
  \item \textbf{Cubic} (\texttt{lambda\_cubic}, $\gamma=0.5$):
        $U_{\rm self}=\frac{\gamma}{4}\sum_i x_i^4$,
        $\phi(x_i,x_j)=x_i x_j$.  The quartic self-confinement makes
        the marginal stationary distribution distinctly non-Gaussian.
\end{itemize}

\paragraph{Two-intervention response protocol}
We estimate the response matrix $R_{i\mid j}$ via the corrected
two-intervention design as implied by \ref{thm:discovery-main}.  Starting from a
common burn-in snapshot of the null system
($T_{\rm burn}=4.0$, $\Delta t_{\rm burn}=0.02$, $N$ trajectories), we
simulate two ensembles on a short window of length
$s_{\rm window}=0.4$ (with $n_{\rm steps}=40$ Euler-Maruyama
substeps, $\Delta t=0.01$) under
\begin{align}
  \lambda^0(s)  &= \lambda^{\!\star},
    & &\text{(frozen)} \\
  \lambda^v(s)  &= \lambda^{\!\star}
                                 + v\,s\,\mathbf{e}_j,
    & &\text{(probe)}
\end{align}
with response velocity $v=0.5$ and \emph{independent} Brownian noise
streams. Under this single-coordinate intervention
the analytic ground-truth response $R^{\rm true}_{i\mid j}$ of every
non-parent vanishes \emph{exactly}, giving a clean parent-vs-non-parent
binary classification problem.  For each candidate $i\!\neq\!j$ we
fit a single channel-EP estimator on the concatenated frozen+probe
trajectories and read
\begin{equation}
  \hat R_{i\mid j}
  \;=\; \mathrm{slope}_s
        \!\bigl[\hat\sigma^v_i(s) - \hat\sigma^0_i(s)\bigr]\,/\,v,
\end{equation}
where the slope is a least-squares fit through the origin over the
$n_{\rm steps}$ midpoints of the short window.

\subsubsection{NEEP Training}
\label{sec:neep}

The per-coordinate, time-resolved EP rate $\hat\sigma_i(t)$ is
obtained with the non-stationary local NEEP estimator
\citep{Otsubo_2020} implemented as a feed-forward
network with a temporal kernel basis (\texttt{FNNKt}) using the public \texttt{LearnEntropy} library: the input is
the joint state $x\in \mathbb{R}^{n}$, the output is the time-resolved
scalar EP rate of coordinate $i$.  We use the same architecture and
schedule for every node:
\begin{itemize}[leftmargin=*,itemsep=2pt]
  \item Architecture: $L=2$ hidden layers, width $H=64$,
        $n_{\rm output}=20$ temporal kernel centres
        \citep{Kim_2020}.
  \item Optimiser: Adam, learning rate $5\!\times\!10^{-4}$,
        weight decay $10^{-3}$, $2000$ gradient-ascent steps.
  \item Train/validation split: $70/30$ along the trajectory axis;
        best-on-validation checkpoint selected.
\end{itemize}
Each graph took $155$--$294$\,s on a single GPU. The full sweep (3 systems $\times$ 10 graphs)
fits in roughly one GPU-hour.

\subsubsection{Metrics}
\label{sec:cdep_metrics}

For each graph we report:
\begin{itemize}[leftmargin=*,itemsep=2pt]
  \item \textbf{Headline F1, precision, recall.}  A node $i\!\neq\!j$
        is predicted as a parent of $j$ iff
        \begin{equation}
          \label{eq:threshold_rule}
          |\hat R_{i\mid j}|
          \;>\;
          \tau\,\max_{i'\neq j}|\hat R_{i'\mid j}|,
          \qquad \tau=0.10.
        \end{equation}
        Precision, recall, and F1 are computed against $pa(j)$.
  \item \textbf{Localized Structural Hamming Distance.}  We define
        the localized SHD at the target $j$ as the symmetric
        difference between the predicted parent set and the true
        parent set,
        \begin{equation}
          \mathrm{SHD}_j
          \;:=\;
          \bigl|\widehat{pa}(j)\,\triangle\,pa(j)\bigr|
          \;=\; \mathrm{FP}_j + \mathrm{FN}_j,
        \end{equation}
        scored with the same threshold rule
        \eqref{eq:threshold_rule}.  This is the SHD of the $j$-th
        in-edge column of the adjacency matrix and is the
        finest-grain SHD compatible with our parent-recovery setup.
\end{itemize}
We aggregate across the $G=10$ graphs of each system family by
reporting the mean and a $95\%$ percentile bootstrap interval
($B=20{,}000$ resamples).  The bootstrap samples are over graph
indices, so the intervals reflect graph-to-graph variability at
fixed estimator hyperparameters.

\end{document}